\newtheorem{theorem}{Theorem}[section]
\newtheorem{lemma}[theorem]{Lemma}
\newtheorem{proposition}[theorem]{Proposition}
\newtheorem{corollary}[theorem]{Corollary}
\newtheorem{definition}[theorem]{Definition}
\newtheorem{assumption}[theorem]{Assumption}
\newtheorem{remark}[theorem]{Remark}
\def \bE {\mathbb{E}}
\def \bN {\mathbb{N}}
\def \bP {\mathbb{P}}
\def \bR {\mathbb{R}}
\def \bZ {\mathbb{Z}}
\def \cA {\mathcal{A}}
\def \cB {\mathcal{B}}
\def \cE {\mathcal{E}}
\def \cH {\mathcal{H}}
\def \cO {\mathcal{O}}
\def \cX {\mathcal{X}}
\def \Ba {{\boldsymbol{a}}}
\def \Bb {{\boldsymbol{b}}}
\def \Br {{\boldsymbol{r}}}
\def \Bs {{\boldsymbol{s}}}
\def \Bu {{\boldsymbol{u}}}
\def \Bv {{\boldsymbol{v}}}
\def \Bw {{\boldsymbol{w}}}
\def \Bx {{\boldsymbol{x}}}
\def \By {{\boldsymbol{y}}}
\def \Re {\,{\rm Re}\,}
\def \Im {\,{\rm Im}\,}
\def \sgn {\,{\rm sgn}\,}
\begin{document}

\title{Universality and approximation bounds for echo state networks with random weights}

\author{Zhen Li and Yunfei Yang
\thanks{Z. Li is with Theory Lab, Huawei Technologies Co., Ltd., Shenzhen, China. (E-mail: lishen03@gmail.com)}
\thanks{Y. Yang is with Department of Mathematics, City University of Hong Kong, Hong Kong, China. The first version of this paper was written when he was with The Hong Kong University of Science and Technology. (Corresponding author, E-mail: yyangdc@connect.ust.hk)}
}

\markboth{}%
{Z. Li and Y. Yang: Universality and approximation bounds for echo state networks with random weights}


\maketitle

\begin{abstract}
We study the uniform approximation of echo state networks with randomly generated internal weights. These models, in which only the readout weights are optimized during training, have made empirical success in learning dynamical systems. Recent results showed that echo state networks with ReLU activation are universal. In this paper, we give an alternative construction and prove that the universality holds for general activation functions. Specifically, our main result shows that, under certain condition on the activation function, there exists a sampling procedure for the internal weights so that the echo state network can approximate any continuous casual time-invariant operators with high probability. In particular, for ReLU activation, we give explicit construction for these sampling procedures. We also quantify the approximation error of the constructed ReLU echo state networks for sufficiently regular operators.
\end{abstract}

\begin{IEEEkeywords}
Universality, approximation error, echo state networks, reservoir computing
\end{IEEEkeywords}

\section{Introduction}
\IEEEPARstart{R}{eservoir} computing \cite{lukosevicius2009reservoir}, such as echo state networks \cite{jaeger2004harnessing} and liquid state machines \cite{maass2002real}, is a paradigm for supervised learning of dynamical systems, which transforms input data into a high-dimensional space by a state-space nonlinear system called reservoir, and performs the learning task only on the readout. Due to the simplicity of this computational framework, it has been applied to many fields and made remarkable success in many tasks such as temporal pattern prediction, classification and generation \cite{tanaka2019recent}. Motivated by these empirical successes, researchers have devoted a lot of effort in the theoretical understanding of the properties and performances of reservoir computing models (see, for instance, \cite{jaeger2001echo,buehner2006tighter,lukosevicius2009reservoir,yildiz2012re,grigoryeva2019differentiable,gonon2023approximation,gonon2020risk} and references therein).

In particular, recent studies \cite{grigoryeva2018echo,grigoryeva2018universal,gonon2020reservoir,gonon2021fading} showed that echo state networks are universal in the sense that they can approximate sufficiently regular input/output systems (i.e. operators) in various settings. However, many of these universality results do not guarantee the important property of echo state networks (and reservoir computing) that the state-space system is randomly generated, which is the major difference between reservoir computing and recurrent neural networks. To be concrete, consider the echo state network
\begin{equation}\label{ESN example}
\begin{cases}
\Bs_t = \phi(W_1\Bs_{t-1}+ W_2 \Bu_t + \Bb), \\
y_t = \Ba \cdot \Bs_t,
\end{cases}
\end{equation}
where $\Bu_t, y_t, \Bs_t$ are the input, output and hidden state at time step $t$, and $\phi$ is a prescribed activation function. In general, the weight matrices $W_1, W_2, \Bb$ are randomly generated and only the readout vector $\Ba$ is trained in a supervised learning manner. But the universality results in many papers, such as \cite{grigoryeva2018echo} and \cite{gonon2021fading}, require that all the weights depend on the target system that we want to approximate. Hence, these results can not completely explain the approximation capacity of echo state networks. To overcome this drawback of current theories, the recent work \cite{gonon2023approximation} studied the approximation of random neural networks in a Hilbert space setting and proposed a sampling procedure for the internal weights $W_1, W_2, \Bb$ so that echo state network (\ref{ESN example}) with ReLU activation is universal in $L^2$ sense.

In this paper, we generalize these results and study the universality of echo state networks with randomly generated internal weights. Our main results show that, under weak assumption on the activation function $\phi$, there exists a sampling procedure for $W_1, W_2, \Bb$ so that the system (\ref{ESN example}) can approximate any continuous time-invariant operator with prescribed $L^\infty$ error in high probability. In particular, for ReLU activation, we show that the echo state network (\ref{ESN example}) is universal when $W_1=c_\mu WPW^\intercal$ and $W_2=WQ$, where $(W,\Bb)$ is a random matrix whose entries are sampled independently from a general symmetric distribution $\mu$, $c_\mu$ is a constant depending on $\mu$ and $P,Q$ are fixed matrices defined by (\ref{matrix PQ}).

Let us compare our results with the most related work \cite{gonon2023approximation}. Given a series of random samples from a uniform distribution, the authors of \cite{gonon2023approximation} provided a procedure to use these samples to construct the internal weights so that the ReLU echo state network is universal in high probability. We prove that such kind of sampling procedure also exists for general activation functions (it is sufficient that the activation does not grow too fast, see Corollary \ref{universal pair co} and Assumption \ref{assum pair} for details). Different from the construction in \cite{gonon2023approximation}, which takes advantage of the piecewise linearity of the ReLU function, our construction makes use of the concentration of probability measures. Hence, it does not rely on the piecewise linearity and can be applied to general activation functions. But our sampling procedure is given implicitly and it is difficult to compute it explicitly for general activation functions. We can only provide explicit examples for ReLU networks. One of the shortcoming of our construction is that the constructed neural networks may not have the echo state property. But we show that these networks satisfy a property slightly weaker than the echo state property (see Remark \ref{weak ESP}), which, we think, is good enough for many applications.

\subsection{Notations}

We denote $\bN:=\{1,2,\dots\}$ as the set of positive integers. Let $\bZ$ (respectively, $\bZ_+$ and $\bZ_-$) be the set of integers (respectively, the non-negative and non-positive integers). The ReLU function is denoted by $\sigma(x):=\max\{x,0\}$. Throughout the paper, $\bR^d$ is equipped with the sup-norm $\|\cdot\|_\infty$, unless it is explicitly stated. For any $U\subseteq \bR^d$, we use $U^\bZ$ to denote the set of sequences of the form $\Bu = (\dots,\Bu_{-1}, \Bu_0, \Bu_1, \dots)$, $\Bu_t \in U$, $t\in \bZ$. The set $U^{\bZ_+}$ and $U^{\bZ_-}$ consist of right and left infinite sequences of the form $(\Bu_0,\Bu_1,\dots)$ and $(\dots,\Bu_{-1},\Bu_0)$, respectively. For any $i\le j$ and $\Bu \in U^{\bZ}$, we denote $\Bu_{i:j} :=(\Bu_i,\dots,\Bu_j) \in U^{j-i+1}$ and use the convention that, when $i=-\infty$ or $j=\infty$, it denotes the left or right infinite sequence. And we will often regard $\Bu_{-\infty:j}$ as an element of $U^{\bZ_-}$. The supremum norm of the sequence is denoted by $\|\Bu\|_\infty := \sup_{t\in \bZ} \|\Bu_t\|_\infty$. A \emph{weighting sequence} $\eta:\bZ_+ \to (0,1]$ is a decreasing sequence with zero limit. The weight norm on $(\bR^d)^{\bZ_-}$ associated to $\eta$ is denoted by $\|\Bu\|_\eta := \sup_{t\in \bZ_-} \eta_{-t} \|\Bu_t \|_\infty$.

\section{Continuous causal time-invariant operators}

We study the uniform approximation problem for input/output systems or operators of signals in discrete time setting. We mainly consider the operators $F:([-1,1]^d)^\bZ \to \bR^{\bZ}$ that satisfy the following three properties:
\begin{enumerate}[label=\textnormal{(\arabic*)},parsep=0pt]
\item Causality: For any $\Bu,\Bv\in ([-1,1]^d)^\bZ$ and $t\in \bZ$, $\Bu_{-\infty:t} = \Bv_{-\infty:t}$ implies $F(\Bu)_t = F(\Bv)_t$.

\item Time-invariance: $F(T_\tau(\Bu))_t = F(\Bu)_{t-\tau}$ for any $\tau \in \bZ$, where $T_{\tau}: (\bR^d)^{\bZ} \to (\bR^d)^{\bZ}$ is \emph{time delay operator} defined by $T_\tau (\Bu)_t = \Bu_{t-\tau}$.

\item Continuity (fading memory property): $F:([-1,1]^d)^{\bZ} \to [-B,B]^{\bZ}$ is uniformly bounded for some $B>0$ and is continuous with respect to the product topology.
\end{enumerate}
The paper \cite{grigoryeva2018echo} gave a comprehensive study of these operators. We recall that the causal time-invariant operator $F:([-1,1]^d)^\bZ \to \bR^{\bZ}$ is one-to-one correspondence with the functional $F^*: ([-1,1]^d)^{\bZ_-} \to \bR$ defined by $F^*(\Bu_-) := F(\Bu)_0$ where $\Bu\in ([-1,1]^d)^\bZ$ is any extension of $\Bu_- \in ([-1,1]^d)^{\bZ_-}$ such that $\Bu_{-\infty:0} = \Bu_-$ ($F^*$ is well-defined by causality). And we can reconstruct $F$ from $F^*$ by $F(\Bu)_t = F^*(P_{\bZ_-}\circ T_{-t}(\Bu))$, where $P_{\bZ_-}:(\bR^d)^{\bZ} \to (\bR^d)^{\bZ_-}$ is the natural projection. In particular, for any causal time-invariant operators $F,G:([-1,1]^d)^\bZ \to \bR^{\bZ}$, 
\begin{multline*}
\sup_{\Bu\in ([-1,1]^d)^\bZ}\|F(\Bu) - G(\Bu) \|_\infty \\
= \sup_{\Bu_-\in ([-1,1]^d)^{\bZ_-}} |F^*(\Bu_-) - G^*(\Bu_-)|.
\end{multline*}
Hence, the approximation problem of causal time-invariant operators $F$ can be reduced to the approximation of functionals $F^*$ on $([-1,1]^d)^{\bZ_-}$.

We remark that the product topology on $(\bR^d)^\bZ$ is different from the uniform topology induced by the sup-norm. However, for the set $([-B,B]^d)^{\bZ_-}$ of uniformly bounded sequences, the product topology coincides with the topology induced by the weight norm $\|\Bu\|_\eta$ of any weighting sequence $\eta$. It is shown by \cite[Section 2.3]{grigoryeva2018echo} that the causal time-invariant operator $F$ is continuous (i.e. has the fading memory property) if and only if the corresponding functional $F^*$ is continuous with respect to the product topology, which is equivalent to $F^*$ has \emph{fading memory property} with respect to some (and hence any) weighting sequence $\eta$, i.e. $F^*$ is a continuous function on the compact metric space $(([-1,1]^d)^{\bZ_-}, \|\cdot\|_\eta)$. In other words, for any $\epsilon>0$, there exists a $\delta(\epsilon)>0$ such that for any $\Bu,\Bv \in ([-1,1]^d)^{\bZ_-}$, 
\begin{multline*}
\|\Bu-\Bv\|_\eta = \sup_{t\in \bZ_-} \eta_{-t} \|\Bu_t-\Bv_t\|_\infty  <\delta(\epsilon) \\
\Longrightarrow |F^*(\Bu) - F^*(\Bv)|<\epsilon.
\end{multline*}

Next, we introduce several notations to quantify the regularity of causal time-invariant operators. For any $m\in \bN$, we can associate a function $F^*_m:([-1,1]^d)^m \to \bR$ to any causal time-invariant operator $F:([-1,1]^d)^\bZ \to \bR^{\bZ}$ by
\begin{equation}\label{F^*_m}
F^*_m(\Bu_{-m+1},\dots, \Bu_0) := F^*(\cdots, \boldsymbol{0}, \Bu_{-m+1},\dots, \Bu_0).
\end{equation}
If the functional $F^*$ can be approximated by $F_m^*$ arbitrarily well, we say $F$ has approximately finite memory. The following definition quantifies this approximation.

\begin{definition}[Approximately finite memory]
For any causal time-invariant operator $F:([-1,1]^d)^\bZ \to \bR^{\bZ}$, let $\epsilon \ge 0$ and $m\in \bN$, we denote 
\begin{align*}
\cE_F(m) &:=  \sup_{\Bu\in ([-1,1]^d)^{\bZ_-}} |F^*(\Bu) - F^*_m(\Bu_{-m+1:0})|, \\
m_F(\epsilon) &:= \min \{m\in \bN: \cE_F(m) \le \epsilon \}.
\end{align*}
If $m_F(\epsilon)<\infty$ for all $\epsilon>0$, we say $F$ has approximately finite memory. If $m_F(0)<\infty$, we say $F$ has finite memory.
\end{definition}

Note that $m_F(\epsilon)$ is a non-increasing function of $\epsilon$. By the time-invariance of $F$, for any $t\in \bZ$,
\begin{equation}\label{error of F^*_m}
\begin{aligned}
&\sup_{\Bu \in ([-1,1]^d)^\bZ} |F(\Bu)_t - F_m^*(\Bu_{t-m+1:t})| \\
= &\sup_{\Bu \in ([-1,1]^d)^\bZ} |F^*(\Bu_{-\infty:t}) - F_m^*(\Bu_{t-m+1:t})| = \cE_F(m).
\end{aligned}
\end{equation}
Hence, $\cE_F(m)$ quantifies how well $F$ can be approximated by functionals with finite memory.

\begin{definition}[Modulus of continuity]
Suppose the functional $F^*:([-1,1]^d)^{\bZ_-} \to \bR$ has fading memory property with respect to some weighting sequence $\eta$. We denote the modulus of continuity with respect to $\|\cdot\|_\eta$ by
\begin{multline*}
\omega_{F^*}(\delta;\eta) := \sup\{ |F^*(\Bu) - F^*(\Bu')|: \Bu,\Bu'\in ([-1,1]^d)^{\bZ_-},\\
\|\Bu-\Bu'\|_\eta \le \delta \},
\end{multline*}
and the inverse modulus of continuity
\[
\omega_{F^*}^{-1}(\epsilon;\eta) := \sup\{\delta>0: \omega_{F^*}(\delta;\eta)\le\epsilon \}.
\]
Similarly, the modulus of continuity (with respect to $\|\cdot\|_\infty$) of a continuous function $f:[-1,1]^m \to \bR$ and the inverse modulus of continuity are defined by
\begin{align*}
\omega_f(\delta) &:= \sup\{ |f(\Bx) - f(\Bx')|: \Bx,\Bx'\in [-1,1]^m,\\ 
&\qquad\qquad\qquad\qquad\qquad\qquad \|\Bx-\Bx'\|_\infty \le \delta \}, \\
\omega_f^{-1}(\epsilon) &:= \sup\{\delta>0: \omega_f(\delta)\le\epsilon \}.
\end{align*}
\end{definition}

The next proposition quantifies the continuity of causal time-invariant operators by the approximately finite memory and modulus of continuity. This proposition is a modification of similar result in \cite{hanson2019universal} to our setting. It shows that a causal time-invariant operator $F$ is continuous if and only if it has approximately finite memory and each $F^*_m$ is continuous. 

\begin{proposition}\label{continuity}
Let $F:([-1,1]^d)^\bZ \to \bR^{\bZ}$ be a causal time-invariant operator.
\begin{enumerate}[label=\textnormal{(\arabic*)},parsep=0pt]
\item If $F$ has approximately finite memory and $\omega_{F^*_m}(\delta)<\infty$ for any $m\in \bN$ and $\delta>0$, then $F^*$ has fading memory property with respect to any weighting sequence $\eta$: for any $\epsilon>0$, $\Bu,\Bv\in ([-1,1]^d)^{\bZ_-}$ and $m=m_F(\epsilon/3)$,
\[
\|\Bu-\Bv\|_\eta \le \eta_{m-1}\omega_{F^*_m}^{-1}(\epsilon/3) \Longrightarrow |F^*(\Bu) - F^*(\Bv)|\le \epsilon.
\]
In other words, $\omega_{F_*}(\eta_{m-1}\omega_{F^*_m}^{-1}(\epsilon/3); \eta) \le \epsilon$ for $m=m_F(\epsilon/3)$.

\item If $F^*$ has fading memory property with respect to some weighting sequence $\eta$, then $F$ has approximately finite memory with $\cE_F(m) \le \omega_{F^*}(\eta_m,\eta)$, 
\[
m_F(\epsilon) \le \min\left\{ m\in \bN: \eta_m \le \omega_{F^*}^{-1}(\epsilon;\eta) \right\},
\]
and $\omega_{F^*_m}(\delta) \le \omega_{F^*}(\delta;\eta)$ for any $m\in \bN$.
\end{enumerate}
\end{proposition}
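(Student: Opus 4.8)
The plan is to deduce both parts from a single observation: on the finite window $\{-m+1,\dots,0\}$ the weight norm $\|\cdot\|_\eta$ is comparable to the sup-norm, because $\eta$ is decreasing. Concretely, for $\Bu,\Bv\in([-1,1]^d)^{\bZ_-}$ the coordinates inside the window are weighted by factors no smaller than $\eta_{m-1}$, while those outside ($-t\ge m$) are weighted by factors no larger than $\eta_m$, and all weights lie in $(0,1]$. Part (1) is then a three-term triangle inequality, and part (2) is a direct truncation estimate; no fixed-point or compactness machinery is needed beyond what is already recorded in the excerpt.

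For part (1), fix $\epsilon>0$ and set $m=m_F(\epsilon/3)$, so that $\cE_F(m)\le\epsilon/3$ by definition. I would split
\begin{multline*}
|F^*(\Bu) - F^*(\Bv)| \le |F^*(\Bu) - F^*_m(\Bu_{-m+1:0})| \\
+ |F^*_m(\Bu_{-m+1:0}) - F^*_m(\Bv_{-m+1:0})| \\
+ |F^*_m(\Bv_{-m+1:0}) - F^*(\Bv)|.
\end{multline*}
The first and third terms are each at most $\cE_F(m)\le\epsilon/3$ by (\ref{error of F^*_m}). For the middle term, the bound $\eta_{-t}\ge\eta_{m-1}$ for $-t\le m-1$ turns the hypothesis $\|\Bu-\Bv\|_\eta\le\eta_{m-1}\omega_{F^*_m}^{-1}(\epsilon/3)$ into
\[
\|\Bu_{-m+1:0} - \Bv_{-m+1:0}\|_\infty \le \eta_{m-1}^{-1}\|\Bu-\Bv\|_\eta \le \omega_{F^*_m}^{-1}(\epsilon/3),
\]
so that the middle term is at most $\omega_{F^*_m}(\omega_{F^*_m}^{-1}(\epsilon/3))\le\epsilon/3$. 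Summing the three bounds gives $\epsilon$.

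For part (2), I would estimate each quantity by extending finite windows with zeros. Given $\Bu$, let $\Bu'$ agree with $\Bu$ on $\{-m+1,\dots,0\}$ and equal $\boldsymbol{0}$ elsewhere, so $F^*_m(\Bu_{-m+1:0})=F^*(\Bu')$ by (\ref{F^*_m}). Since $\Bu-\Bu'$ vanishes on the window and $\|\Bu_t\|_\infty\le1$ with $\eta_{-t}\le\eta_m$ off it, we get $\|\Bu-\Bu'\|_\eta\le\eta_m$, hence $|F^*(\Bu)-F^*_m(\Bu_{-m+1:0})|\le\omega_{F^*}(\eta_m;\eta)$; taking the supremum over $\Bu$ yields $\cE_F(m)\le\omega_{F^*}(\eta_m;\eta)$. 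The bound on $m_F(\epsilon)$ follows because $\eta_m\le\omega_{F^*}^{-1}(\epsilon;\eta)$ forces $\omega_{F^*}(\eta_m;\eta)\le\epsilon$ by monotonicity, so $\cE_F(m)\le\epsilon$. For the last inequality, given $\Bx,\Bx'\in([-1,1]^d)^m$ with $\|\Bx-\Bx'\|_\infty\le\delta$, extending both by zeros produces sequences at weight-norm distance $\le\delta$ (using $\eta_{-t}\le1$ on the window and that both vanish off it), whence $|F^*_m(\Bx)-F^*_m(\Bx')|\le\omega_{F^*}(\delta;\eta)$ and therefore $\omega_{F^*_m}(\delta)\le\omega_{F^*}(\delta;\eta)$.

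The only genuinely technical point is the inequality $\omega_g(\omega_g^{-1}(\epsilon))\le\epsilon$ used in both parts (for $g=F^*_m$ and for $g=F^*$): this needs the modulus of continuity to be right-continuous at the relevant argument, which holds because the modulus of a continuous function on a compact set ($([-1,1]^d)^m$ with $\|\cdot\|_\infty$, respectively $(([-1,1]^d)^{\bZ_-},\|\cdot\|_\eta)$) is itself continuous and vanishes at $0$. Everything else is bookkeeping; the one place to stay attentive is matching the index of $\eta$ to the window, i.e. using $\eta_{-t}\ge\eta_{m-1}$ \emph{inside} the window in part (1) and $\eta_{-t}\le\eta_m$ \emph{outside} it in part (2), both of which are immediate from the decreasing property of $\eta$.
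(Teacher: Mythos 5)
Your proof follows the paper's argument essentially verbatim: part (1) is the same three-term triangle inequality with the window comparison $\|\Bu_{-m+1:0}-\Bv_{-m+1:0}\|_\infty \le \eta_{m-1}^{-1}\|\Bu-\Bv\|_\eta \le \omega_{F^*_m}^{-1}(\epsilon/3)$, and part (2) is the same zero-extension estimate $\|\Bu-(\dots,\boldsymbol{0},\Bu_{-m+1:0})\|_\eta\le\eta_m$ combined with monotonicity of the modulus, exactly as in the appendix. Your added remark on $\omega_g\bigl(\omega_g^{-1}(\epsilon)\bigr)\le\epsilon$ (which the paper uses silently) is a welcome refinement, with two small corrections: it is \emph{left}-continuity of $\omega_g$ at $\omega_g^{-1}(\epsilon)$ that is needed, and this continuity follows from subadditivity of the modulus on the \emph{convex} domains $[-1,1]^{md}$ and $(([-1,1]^d)^{\bZ_-},\|\cdot\|_\eta)$, not from compactness alone (on a disconnected compact set the modulus can jump).
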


In order to approximate the continuous causal time-invariant operator $F$, we only need to approximate the functional $F^*$, which can be approximated by the continuous function $F^*_m$ if $m$ is chosen sufficiently large. Hence, any approximation theory of continuous functions can be translated to an approximation result for continuous causal time-invariant operators.  For instance, if we approximate $F^*_m$ by some function $h$, then we can approximate $F$ by
\[
y_t = h(\Bu_{t-m+1},\dots,\Bu_t), \quad t\in \bZ.
\]
The function $h$ uniquely determine a causal time-invariant operator $H$ such that $H(\Bu)_t=h(\Bu_{t-m+1:t})$. Since $H$ has finite memory, $H$ is continuous if and only if $h$ is continuous by Proposition \ref{continuity}. When we approximate $F^*_m$ by polynomials, then $H$ is the Volterra series \cite{boyd1985fading}. When $h$ is a neural network, then $H$ is a temporal convolution neural network, studied by \cite{hanson2019universal}. 

In this paper, we focus on the approximation by echo state networks (ESN), which are special state space models of the form

\begin{equation}\label{general ESN}
\begin{cases}
\Bs_t = \phi(W_1\Bs_{t-1}+ W_2 \Bu_t + \Bb), \\
y_t = \Ba \cdot \Bs_t,
\end{cases}
\end{equation}
where $\Bs_t, \Ba, \Bb \in \bR^n$, $W_1\in \bR^{n\times n}$, $W_2\in \bR^{n\times d}$ and the activation function $\phi:\bR\to \bR$ is applied element-wise. 

\begin{definition}[Existence of solutions and echo state property]
We say the system (\ref{general ESN}) has \emph{existence of solutions property} if for any $\Bu\in ([-1,1]^d)^{\bZ_-}$, there exist $\Bs \in (\bR^n)^{\bZ_-}$ such that $\Bs_t = \phi(W_1\Bs_{t-1}+ W_2 \Bu_t + \Bb)$ holds for each $t\in \bZ_-$. If the solution $\Bs$ is unique, we say the system has \emph{echo state property}. 
\end{definition}

The article \cite[Theorem 3.1]{grigoryeva2018echo} gave sufficient conditions for the system (\ref{general ESN}) to have existence of solutions property and echo state property. In particular, they showed that, if $\phi$ is a bounded continuous function, then the existence of solutions property holds. And if $\phi$ is a bounded Lipschitz continuous function with Lipschitz constant $L_\phi<\infty$ and $\|W_1\| L_\phi <1$, then the system (\ref{general ESN}) has echo state property, where $\|W_1\|$ is the operator norm of the matrix $W_1$. As a sufficient condition to  ensure the echo state property, the hypothesis $\|W_1\| L_\phi <1$ has been extensively studied in the ESN literature \cite{jaeger2001echo,jaeger2004harnessing,buehner2006tighter,yildiz2012re,gandhi2013echo}.

If the system (\ref{general ESN}) has existence of solutions property, the axiom of choice allows us to assign $\Bs(\Bu)\in (\bR^n)^{\bZ_-}$ to each $\Bu\in ([-1,1]^d)^{\bZ_-}$, and hence define a functional $h:([-1,1]^d)^{\bZ_-} \to \bR$ by $h(\Bu) := \Ba \cdot\Bs(\Bu)_0 =y_0$. Thus, we can assign a causal time-invariant operator $H: ([-1,1]^d)^\bZ \to \bR^\bZ$ to the system such that $H^*=h$. When the echo state property holds, this operator is unique. The operator $H$ is continuous if and only if the mapping $\Bu \mapsto \Bs(\Bu)_0$ is a continuous function. In the next section, we study the universality of these operators.

\section{Universal approximation}

As mentioned in the introduction, the recent works of \cite{grigoryeva2018echo,gonon2021fading} showed that the echo state networks (\ref{general ESN}) are universal: Assume $\phi$ is a bounded Lipschitz continuous function. Let $F:([-1,1]^d)^\bZ\to \bR^{\bZ}$ be a continuous causal time-invariant operator, then for any $\epsilon>0$, for sufficiently large $n$, there exists an ESN (\ref{general ESN}) such that the corresponding causal time-invariant operator $F_{ESN}$ satisfies
\begin{equation}\label{app}
\sup_{\Bu\in ([-1,1]^d)^\bZ} \|F(\Bu) - F_{ESN}(\Bu)\|_\infty \le\epsilon.
\end{equation}
In this universal approximation theorem, the weights $W_1, W_2, \Ba, \Bb$ in the network (\ref{general ESN}) depend on the target operator $F$. However, in practice, the parameters $W_1, W_2, \Bb$ are drawn at random from certain given distribution and only the readout vector $\Ba$ are trained by linear regression using observed data related to the target operator $F$. Hence, this universal approximation theorem can not completely explain the empirical performance of echo state networks. 

In this section, our goal is to show that, with randomly generated weights, echo state networks are universal: For any $\epsilon>0$, for sufficiently large $n$, with high probability on $W_1, W_2, \Bb$, which are drawn from certain distribution, there exists $\Ba$ such that we can associate a causal time-invariant operator $F_{ESN}$ to the ESN (\ref{general ESN}) and the approximation bound (\ref{app}) holds. In the context of standard feed-forward neural networks, one can show that similar universal approximation theorem holds for random neural networks. This will be the building block of our main theorem of echo state networks.

\subsection{Universality of random neural networks}

It is well-known that feed-forward neural networks with one hidden layer are universal \cite{cybenko1989approximation,hornik1991approximation,leshno1993multilayer,pinkus1999approximation}. We recall the universal approximation theorem proved by \cite{leshno1993multilayer}, which has minimal requirements on the activation function.

\begin{theorem}\label{universality of NN}
If $\phi:\bR\to \bR$ is continuous and is not a polynomial, then for any compact set $\cX \subseteq \bR^d$, any function $f\in C(\cX)$ and $\epsilon>0$, there exists $a_i,b_i\in \bR$ and $\Bw_i\in \bR^d$, $i=1,\dots,n$, such that
\begin{equation}\label{app of NN}
\sup_{\Bx\in \cX} \left| f(\Bx) - \sum_{i=1}^n a_i \phi(\Bw_i \cdot \Bx+b_i) \right| \le \epsilon.
\end{equation}
\end{theorem}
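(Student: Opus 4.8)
The plan is to prove the classical Leshno--Lin--Pinkus--Schocken universal approximation theorem by reducing the multivariate statement to a one-dimensional density question. The key observation is that the class of functions representable as finite sums $\sum_i a_i \phi(\Bw_i\cdot\Bx + b_i)$ forms a linear subspace $\cM \subseteq C(\cX)$, so by the Hahn--Banach separation theorem it suffices to show that the only signed Borel measure (via the Riesz representation theorem, an element of $C(\cX)^*$) annihilating every such ridge function is the zero measure. First I would reduce to the case $\cX = [-1,1]^d$ by extension, and then reduce to a single direction: fixing a direction $\Bw$, the functions $\Bx \mapsto \phi(\Bw\cdot\Bx+b)$ range over translates of $\phi$ composed with the linear functional $t = \Bw\cdot\Bx$, so the problem localizes to understanding the closure of $\mathrm{span}\{\phi(\lambda t + b): \lambda, b\in\bR\}$ in $C(K)$ for compact intervals $K\subseteq\bR$.

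The heart of the argument is the one-dimensional claim: if $\phi\in C(\bR)$ is not a polynomial, then the closed span of its dilates and translates $\{\phi(\lambda\,\cdot + b)\}$ is all of $C(K)$ on every compact interval. The standard route is mollification: convolving $\phi$ with a smooth compactly supported bump $\psi\in C_c^\infty(\bR)$ produces $\phi * \psi \in C^\infty$, and each such smooth function still lies in the closed span of translates/dilates of $\phi$ because the convolution integral is a uniform limit of Riemann sums, each summand being a scaled translate of $\phi$. The crucial non-polynomiality hypothesis enters here: if $\phi$ were orthogonal (in the distributional sense) to enough test functions, every mollification $\phi*\psi$ would be a polynomial, forcing $\phi$ itself to be a polynomial --- so non-polynomiality guarantees we can find a mollification $\phi*\psi$ that is $C^\infty$ but \emph{not} a polynomial. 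Then I would differentiate: since $\phi*\psi$ is smooth and non-polynomial, some derivative $(\phi*\psi)^{(k)}(b_0)\ne 0$, and difference quotients of the dilates $b\mapsto (\phi*\psi)(\lambda t + b)$ in the parameter $\lambda$ realize $t\mapsto t^k (\phi*\psi)^{(k)}(b_0)$ as uniform limits, hence every monomial $t^k$ lies in the closed span. By Weierstrass, all polynomials, and thus all of $C(K)$, are recovered.

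Finally I would lift back to $\bR^d$. The set of ridge functions contains, for each fixed direction $\Bw$, a copy of the one-dimensional span along that direction, so by the previous step the closed span of ridge functions contains every function of the form $g(\Bw\cdot\Bx)$ with $g\in C(\bR)$ --- in particular all functions $(\Bw\cdot\Bx)^k$, i.e. all powers of arbitrary linear functionals. Expanding $(\Bw\cdot\Bx+b)^k$ and varying $\Bw, b$ generates all multivariate monomials (equivalently, all $d$-variate polynomials lie in the closed span), and a final application of the Stone--Weierstrass theorem on the compact set $\cX$ yields density in $C(\cX)$, which is exactly the approximation bound \eqref{app of NN}.

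I expect the main obstacle to be the one-dimensional density step, specifically the rigorous passage from the non-polynomiality of $\phi$ to the existence of a smooth non-polynomial mollification, and the careful justification that convolutions and parameter-difference-quotients of $\phi$ remain in the \emph{closed} span (uniformly on the compact interval) rather than merely in some weaker limit. The functional-analytic reduction via Hahn--Banach/Riesz and the final Stone--Weierstrass lifting are comparatively routine; the delicate interplay between the distributional characterization of polynomials and uniform approximation is where the real content lies.
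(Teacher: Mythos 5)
The paper does not prove this theorem at all: it is imported as a known result, with a citation to Leshno, Lin, Pinkus and Schocken \cite{leshno1993multilayer}, and the paper's own work begins only afterwards (randomizing the inner weights in Theorem 3.4 and beyond). So there is no internal proof to compare against; what you have written is a reconstruction of the classical proof of the cited result, and your skeleton is the correct one: reduce to one dimension, mollify $\phi$ against $\psi\in C_c^\infty$ so that $\phi*\psi$ lies in the closed span of translates of $\phi$ (uniform limits of Riemann sums), use non-polynomiality to produce a smooth non-polynomial mollification, extract monomials $t^k$ via difference quotients in the dilation parameter, recover all multivariate polynomials from the ridge monomials $(\Bw\cdot\Bx)^k$, and finish with Stone--Weierstrass. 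Two points need tightening before this is a complete proof. First, the phrase ``some derivative $(\phi*\psi)^{(k)}(b_0)\neq 0$'' must be quantified correctly: you need, for \emph{every} $k$, a point $b_k$ with $(\phi*\psi)^{(k)}(b_k)\neq 0$, which indeed follows since $(\phi*\psi)^{(k)}\equiv 0$ would force $\phi*\psi$ to be a polynomial of degree less than $k$; this is what puts every monomial $t^k$ in the closed span. Second, the implication ``if every mollification $\phi*\psi$ is a polynomial, then $\phi$ is a polynomial'' is the genuine crux, as you correctly flag: the difficulty is that the degree of $\phi*\psi$ could a priori be unbounded as $\psi$ varies, and the standard repair is a Baire category argument on the closed sets $\{\psi\in C_c^\infty:\ \phi*\psi \text{ is a polynomial of degree at most } k\}$ in the Fr\'echet space of test functions, which yields a uniform degree bound and hence, via an approximate identity, that $\phi$ itself is a polynomial of that bounded degree. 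With these two repairs your outline matches the published argument; note also that your opening Hahn--Banach reduction is dispensable, since once all polynomials lie in the closed span, Stone--Weierstrass alone gives density in $C(\cX)$.
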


In Theorem \ref{universality of NN}, the parameters $a_i,b_i,\Bw_i$ depend on the target function $f$. In order to take into account the fact that for ESN the inner weights $\Bw_i, b_i$ are randomly chosen and only $a_i$ are trained from data, we consider the random neural networks whose weights $(\Bw_i,b_i)$ are drawn from some probability distribution $\mu$. This motivates the following definition.

\begin{definition}\label{def universal}
Suppose $(\Bw_i^\intercal,b_i)_{i\in \bN}$ is a sequence of i.i.d. random vectors drawn from probability distribution $\mu$ defined on $\bR^{d+1}$. If for any compact set $\cX \subseteq \bR^d$, any function $f\in C(\cX)$ and $\epsilon,\delta \in (0,1)$, there exists $n\in\bN$ such that, with probability at least $1-\delta$, the inequality (\ref{app of NN}) holds for some $(a_i)_{1\le i\le n}$, then we say the pair $(\phi,\mu)$ is universal.
\end{definition}

\begin{remark}
We note that the readout weights $(a_i)_{1\le i\le n}$ depend on the randomly generated weights $(\Bw_i,b_i)_{1\le i\le n}$. In the definition, we only require the existence of neural networks that converge to the target function in probability. This requirement is actually equivalent to almost sure convergence. To see this, convergence in probability implies that there exists a sub-sequence $n_k$ such that there exists $f_{n_k}$, which is a linear combination of $\{\phi(\Bw_i \cdot \Bx+b_i)\}_{i=1}^{n_k}$, converging to the target function $f$ almost surely as $k\to \infty$. Notice that, for any $n\ge n_k$, $f_{n_k}$ is also a linear combination of $\{\phi(\Bw_i \cdot \Bx+b_i)\}_{i=1}^n$. Hence, there exists $f_n$, which is a linear combination of $\{\phi(\Bw_i \cdot \Bx+b_i)\}_{i=1}^n$, such that $f_n$ convergences to $f$ almost surely.
\end{remark}

The universality of random neural networks was widely studied in the context of extreme learning machine \cite{huang2006universal,huang2006extreme,huang2012extreme}. In particular, \cite{huang2006universal} used an incremental construction to establish the random universal approximation theorem in $L^2$-norm for bounded non-constant piecewise continuous activation function. The paper \cite{gonon2023approximation} studied the approximation of random neural networks in a Hilbert space setting and established the universal approximation of ReLU neural networks. The recent work of \cite{hart2020embedding} considered the approximation in $C^1$-norm and assumed that the activation function $\phi\in C^1(\bR)$ satisfying $0<\int_\bR |\phi'(x)|dx<\infty$. They argued that, since there exists a neural network $g$ that approximates the target function $f$ by universality of neural networks, there will eventually be some randomly generated samples $(\Bw_i^\intercal,b_i)$ that are close to the weights of $g$ and we can discard other samples by setting the corresponding $a_i=0$, hence the random universal approximation holds. It is possible to generalize their argument to the approximation of continuous functions. Nevertheless, we will give an alternative approach based on law of large numbers and show that $(\phi,\mu)$ is universal under very weak conditions. Our analysis will need the following uniform law of large numbers.

\begin{lemma}[\cite{jennrich1969asymptotic}, Theorem 2]\label{ULLN}
Let $\Bw_1,\dots,\Bw_n$ be i.i.d. samples from some probability distribution $\mu$ on $\Omega$. Suppose
\begin{enumerate}[label=\textnormal{(\arabic*)},parsep=0pt]
\item $\cX$ is compact;

\item $f(\Bx,\Bw)$ is continuous on $\Bx$ for each $\Bw\in \Omega$, and measurable on $\Bw$ for each $\Bx \in \cX$;

\item there exists $g(\Bw)$ with $|f(\Bx,\Bw) | \le g(\Bw)$ for all $\Bx \in \cX$ and $\bE_\Bw[g(\Bw)] <\infty$.
\end{enumerate}
Then $\bE_\Bw[f(\Bx,\Bw)]$ is continuous on $\Bx$ and 
\[
\sup_{\Bx \in \cX} \left| \frac{1}{n} \sum_{i=1}^n f(\Bx,\Bw_i) - \bE_\Bw[f(\Bx,\Bw)] \right| \to 0
\]
$\mu$-almost surely as $n\to\infty$.
\end{lemma}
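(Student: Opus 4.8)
The plan is to combine the ordinary strong law of large numbers, applied pointwise, with a compactness argument that upgrades pointwise convergence to uniform convergence over $\cX$. The continuity of the limit $\bE_\Bw[f(\Bx,\Bw)]$ comes first and almost for free: if $\Bx_k \to \Bx$ then $f(\Bx_k,\Bw) \to f(\Bx,\Bw)$ for each fixed $\Bw$ by assumption (2), and since $|f(\Bx_k,\Bw)| \le g(\Bw)$ with $\bE_\Bw[g(\Bw)] < \infty$ by assumption (3), the dominated convergence theorem gives $\bE_\Bw[f(\Bx_k,\Bw)] \to \bE_\Bw[f(\Bx,\Bw)]$.

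For the uniform convergence, I would introduce the oscillation of $f$ over small balls. For $\Bx\in\cX$ and $\rho>0$, define
\[
h_\rho(\Bx,\Bw) := \sup_{\By\in\cX,\ \|\By-\Bx\|_\infty\le\rho} |f(\By,\Bw) - f(\Bx,\Bw)|,
\]
which is dominated by $2g(\Bw)$ and, by continuity of $f(\cdot,\Bw)$, decreases to $0$ pointwise in $\Bw$ as $\rho\downarrow 0$. Dominated convergence then yields $\bE_\Bw[h_\rho(\Bx,\Bw)] \to 0$ as $\rho\downarrow 0$. Fixing $\epsilon>0$, for each $\Bx$ I choose $\rho(\Bx)$ so small that $\bE_\Bw[h_{\rho(\Bx)}(\Bx,\Bw)] < \epsilon/4$; the open balls $\{\By:\|\By-\Bx\|_\infty<\rho(\Bx)\}$ cover the compact set $\cX$, so a finite subcover around centers $\Bx_1,\dots,\Bx_N$ suffices.

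With this finite set of centers in hand, I would estimate, for an arbitrary $\Bx\in\cX$ lying in the ball around some $\Bx_j$, the quantity
\[
\Big|\tfrac1n\sum_{i=1}^n f(\Bx,\Bw_i) - \bE_\Bw[f(\Bx,\Bw)]\Big|
\]
by the triangle inequality through $\Bx_j$ into three pieces: the empirical oscillation $\tfrac1n\sum_i h_{\rho_j}(\Bx_j,\Bw_i)$, the pointwise deviation $|\tfrac1n\sum_i f(\Bx_j,\Bw_i) - \bE_\Bw[f(\Bx_j,\Bw)]|$, and the expected oscillation $\bE_\Bw[h_{\rho_j}(\Bx_j,\Bw)]$. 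The last term is below $\epsilon/4$ by construction. Applying the strong law of large numbers to the finitely many integrable variables $f(\Bx_j,\cdot)$ and $h_{\rho_j}(\Bx_j,\cdot)$, $j=1,\dots,N$, the second piece tends to $0$ and the first tends to $\bE_\Bw[h_{\rho_j}(\Bx_j,\Bw)]<\epsilon/4$ almost surely; since $N$ is finite these convergences hold simultaneously on an event of full measure. Hence almost surely the supremum over $\cX$ is eventually below $\epsilon$, and intersecting over $\epsilon = 1/k$, $k\in\bN$, gives the claimed $\mu$-almost sure uniform convergence.

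The step I expect to require the most care is measurability. The supremum defining $h_\rho(\Bx,\Bw)$ is over an uncountable index set, so it is not obviously $\mu$-measurable in $\Bw$; the remedy is that $f(\cdot,\Bw)$ is continuous and $\cX$ is a separable metric space, so the supremum over the ball equals the supremum over a fixed countable dense subset, which is measurable. The same separability ensures $\sup_{\Bx\in\cX}|\cdots|$ is measurable, so that the almost-sure statement is meaningful. Beyond this, the only nontrivial analytic input is the dominated-convergence argument showing the oscillation expectation vanishes as $\rho\downarrow 0$; everything else reduces to finitely many applications of the classical strong law.
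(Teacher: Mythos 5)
Your proof is correct, but there is nothing in the paper to compare it against: the paper does not prove this lemma at all, importing it verbatim as Theorem 2 of \cite{jennrich1969asymptotic}. What you have written is a self-contained proof of a statement the paper treats as a black box. Your route — pointwise strong law of large numbers at finitely many centers, upgraded to uniformity over the compact set $\cX$ via the oscillation functions $h_\rho(\Bx,\Bw)$, with dominated convergence to make $\bE_\Bw[h_\rho(\Bx,\Bw)]$ small and separability of $\cX$ to keep the uncountable suprema measurable — is the classical bracketing argument for uniform laws of large numbers, and is essentially the same mechanism as in Jennrich's original proof (he works with upper and lower envelopes $\sup_{\By}f(\By,\Bw)$ and $\inf_{\By}f(\By,\Bw)$ over small balls rather than with the oscillation $\sup_{\By}|f(\By,\Bw)-f(\Bx,\Bw)|$, which is an equivalent bookkeeping choice). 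All the delicate points are handled properly: integrability of the envelopes via the domination $h_\rho \le 2g$, the three-term triangle-inequality split through the nearest center, simultaneous almost-sure convergence for the finitely many SLLN applications, and the countable intersection over $\epsilon = 1/k$. The continuity of $\Bx \mapsto \bE_\Bw[f(\Bx,\Bw)]$ by dominated convergence is likewise correct. In short, your proposal fills in, correctly and with the standard argument, a proof the authors chose to cite rather than reproduce.
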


Now, we give a sufficient condition for the pair $(\phi,\mu)$ to be universal. Our proof is a combination of the uniform law of large number (Lemma \ref{ULLN}) and the universality of neural networks (Theorem \ref{universality of NN}). 

\begin{theorem}\label{universal pair}
Suppose the continuous function $\phi:\bR\to \bR$ is not a polynomial and $|\phi(x)| \le C_1|x|^\alpha + C_2$ for some $\alpha,C_1,C_2 \ge 0$. If $\mu$ is a probability distribution with full support on $\bR^{d+1}$ such that $\int \|(\Bw^\intercal,b)\|_\infty^\alpha d\mu(\Bw^\intercal,b)<\infty$, then $(\phi,\mu)$ is universal.
\end{theorem}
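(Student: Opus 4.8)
The plan is to realize a random neural network as a Monte Carlo approximation of an \emph{integral representation} of the target, and to turn that approximation into a high-probability uniform bound via the uniform law of large numbers (Lemma~\ref{ULLN}). Concretely, I would first seek a bounded continuous \emph{density} $\psi:\bR^{d+1}\to\bR$, depending only on $f,\phi,\mu,\epsilon$ and not on the random draws, such that
\[
h(\Bx) := \bE_{(\Bw^\intercal,b)\sim\mu}\left[\psi(\Bw^\intercal,b)\,\phi(\Bw\cdot\Bx+b)\right]
\]
satisfies $\sup_{\Bx\in\cX}|f(\Bx)-h(\Bx)|\le \epsilon/2$. Given such a $\psi$, the choice $a_i:=\psi(\Bw_i^\intercal,b_i)/n$ makes $\sum_{i=1}^n a_i\,\phi(\Bw_i\cdot\Bx+b_i)$ coincide with the empirical average $\frac1n\sum_{i=1}^n \psi(\Bw_i^\intercal,b_i)\phi(\Bw_i\cdot\Bx+b_i)$, which Lemma~\ref{ULLN} forces to converge to $h$ uniformly on $\cX$, $\mu$-almost surely. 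Almost sure uniform convergence implies convergence in probability, so for $n$ large the event $\sup_{\Bx\in\cX}|\frac1n\sum_i \psi(\Bw_i^\intercal,b_i)\phi(\Bw_i\cdot\Bx+b_i)-h(\Bx)|\le\epsilon/2$ has probability at least $1-\delta$; on that event the triangle inequality gives \eqref{app of NN}, which is exactly the universality required by Definition~\ref{def universal}.

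To invoke Lemma~\ref{ULLN} with integrand $\psi(\Bw^\intercal,b)\phi(\Bw\cdot\Bx+b)$ I must exhibit an integrable dominating function, and this is where the growth and moment hypotheses enter. Since $\cX$ is compact, say $\|\Bx\|_\infty\le R$ there, the bound $|\Bw\cdot\Bx+b|\le (dR+1)\|(\Bw^\intercal,b)\|_\infty$ combined with $|\phi(x)|\le C_1|x|^\alpha+C_2$ yields $|\phi(\Bw\cdot\Bx+b)|\le C_1(dR+1)^\alpha\|(\Bw^\intercal,b)\|_\infty^\alpha+C_2$; multiplying by $\|\psi\|_\infty$ gives a dominating function whose $\mu$-expectation is finite precisely because $\int\|(\Bw^\intercal,b)\|_\infty^\alpha\,d\mu<\infty$. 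Continuity of $\psi$ and $\phi$ makes the remaining hypotheses of Lemma~\ref{ULLN} immediate, and the lemma also certifies that $h$ is continuous, so $h\in C(\cX)$ and the reduction above is legitimate.

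The main obstacle is the existence of the bounded continuous density $\psi$, i.e. the density in $C(\cX)$ of the subspace $V:=\{\bE_\mu[\psi(\Bw^\intercal,b)\phi(\Bw\cdot\Bx+b)]:\psi\in C_b(\bR^{d+1})\}$. I would prove this by Hahn--Banach duality. If $V$ were not dense, there would be a nonzero signed Radon measure $\nu$ on $\cX$ annihilating $V$; using the domination above to justify Fubini, $0=\int_\cX h\,d\nu=\bE_\mu[\psi(\Bw^\intercal,b)\,\Phi(\Bw^\intercal,b)]$ for every $\psi\in C_b$, where $\Phi(\Bw^\intercal,b):=\int_\cX\phi(\Bw\cdot\Bx+b)\,d\nu(\Bx)$ is continuous and $\mu$-integrable. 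Ranging $\psi$ over all bounded continuous functions forces $\Phi=0$ $\mu$-almost everywhere, and since $\Phi$ is continuous while $\mu$ has full support, $\Phi$ vanishes identically: $\int_\cX\phi(\Bw\cdot\Bx+b)\,d\nu(\Bx)=0$ for every $(\Bw,b)$. But $\phi$ is continuous and not a polynomial, so by Theorem~\ref{universality of NN} the single-neuron family $\{\phi(\Bw\cdot\Bx+b)\}$ spans a dense subspace of $C(\cX)$, and a measure annihilating a dense subspace is zero, contradicting $\nu\neq0$. (Alternatively one can avoid duality by taking a deterministic approximant $\sum_{i=1}^N a_i^*\phi(\Bw_i^*\cdot\Bx+b_i^*)$ from Theorem~\ref{universality of NN} and replacing each neuron by a normalized local $\mu$-average over a small ball around $(\Bw_i^{*\intercal},b_i^*)$, which carries positive mass by full support, controlling the error through uniform continuity of $(\Bw,b)\mapsto\phi(\Bw\cdot\Bx+b)$ on compacta.) The two delicate points are the Fubini interchange and the upgrade from ``$\mu$-a.e.'' to ``everywhere,'' which are settled by the domination estimate and by continuity of $\Phi$ together with full support of $\mu$, respectively.
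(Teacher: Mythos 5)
Your proposal is correct and follows essentially the same route as the paper's own proof: Hahn--Banach duality to show that the class of $\mu$-integral representations $\Bx\mapsto\bE_\mu[\psi(\Bw^\intercal,b)\phi(\Bw\cdot\Bx+b)]$ is dense in $C(\cX)$ (via Fubini justified by the growth/moment domination, the upgrade from $\mu$-a.e.\ to everywhere using continuity and full support, and Theorem~\ref{universality of NN}), followed by Lemma~\ref{ULLN} to pass to the empirical average with $a_i=\psi(\Bw_i^\intercal,b_i)/n$. The only cosmetic difference is that you take the coefficient functions in $C_b(\bR^{d+1})$ whereas the paper uses $g\in L^\infty(\mu)$, which makes the annihilation step slightly more direct (one can take $g=\sgn\Phi$ rather than invoking that bounded continuous functions separate finite measures); both variants are valid.
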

\begin{proof}
By Hahn-Banach theorem, for any compact set $\cX\subseteq \bR^d$, the linear span of a function class $\cH\subseteq C(\cX)$ is dense in $C(\cX)$ if and only if
\[
\left\{ \nu \in M(\cX): \int_{\cX} h(\Bx) d\nu(\Bx) =0, \forall h\in \cH \right\} = \{ 0 \},
\]
where $M(\cX)$ is the dual space of $C(\cX)$, that is, the space of all signed Radon measures with finite total variation \cite[Chapter 7.3]{folland1999real}.

We consider the linear space 
\begin{multline}\label{H_mu}
\cH_\mu := \left\{ h(\Bx) = \int_{\bR^{d+1}} g(\Bw,b) \phi(\Bw \cdot \Bx+b) d\mu(\Bw^\intercal,b) \right. \\
\bigg.: g\in L^\infty(\mu) \bigg\}.
\end{multline}
Observe that $|g(\Bw,b) \phi(\Bw \cdot \Bx+b)| \le \|g\|_\infty (C_1|\Bw \cdot \Bx+b|^\alpha + C_2) \le C_{\cX} \|g\|_\infty \|(\Bw^\intercal,b)\|_\infty^\alpha + C_2\|g\|_\infty$, where $C_{\cX}$ is a constant depending on the compact set $\cX$. By assumption and Lemma \ref{ULLN}, any $h\in \cH_\mu$ is continuous and hence $\cH_\mu \subseteq C(\cX)$. Suppose $\nu \in M(\cX)$ satisfies $\int_{\cX} h(\Bx) d\nu(\Bx) =0$ for all $h\in \cH_\mu$. Then, by Fubini's theorem, 
\begin{align*}
0 =& \int_{\cX} \int_{\bR^{d+1}} g(\Bw,b) \phi(\Bw \cdot \Bx+b) d\mu(\Bw^\intercal,b) d\nu(\Bx) \\
=& \int_{\bR^{d+1}} g(\Bw,b) \int_{\cX} \phi(\Bw \cdot \Bx+b) d\nu(\Bx) d\mu(\Bw^\intercal,b),
\end{align*}
for all function $g\in L^\infty(\mu)$. Therefore,
\[
\int_{\cX} \phi(\Bw \cdot \Bx+b) d\nu(\Bx) =0, \quad  \mu\mbox{-almost surely}.
\]
Since this function is continuous and $\mu$ has full support, the equality holds for all $(\Bw^\intercal,b)\in \bR^{d+1}$. By Theorem \ref{universality of NN}, the linear span of $\{ \phi(\Bw \cdot \Bx+b): (\Bw^\intercal,b)\in \bR^{d+1} \}$ is dense in $C(\cX)$, which implies $\nu =0$. We conclude that $\cH_\mu$ is dense in $C(\cX)$. In other words, for any $f\in C(\cX)$ and $\epsilon>0$, there exists $h\in\cH_\mu$ such that $|f(\Bx) - h(\Bx)|\le \epsilon/2$ for all $\Bx\in \cX$.

By Lemma \ref{ULLN}, for any $\delta>0$, there exists $n\in \bN$ such that, with probability at least $1-\delta$ on the samples $(\Bw_i^\intercal,b_i)$ from $\mu$, 
\[
\sup_{\Bx \in \cX} \left| \frac{1}{n} \sum_{i=1}^n g(\Bw_i,b_i) \phi(\Bw_i \cdot \Bx+b_i) - h(\Bx) \right| \le \frac{\epsilon}{2}.
\]
By triangle inequality, we conclude that (\ref{app of NN}) holds with $a_i = \frac{1}{n} g(\Bw_i,b_i)$ and the pair $(\phi,\mu)$ is universal.
\end{proof}

In practice, it is more convenient to sample each weight independently from certain distribution $\mu$ on $\bR$, so that $(\Bw^\intercal,b)$ is sample from $\mu^{d+1}$, the $d+1$ products of $\mu$. The next corollary is a direct application of Lemma \ref{universal pair} to this situation.

\begin{corollary}\label{universal pair co}
Suppose the continuous function $\phi:\bR\to \bR$ is not a polynomial and $|\phi(x)| \le C_1|x|^\alpha + C_2$ for some $\alpha,C_1,C_2 \ge 0$. If $\mu$ is a probability distribution with full support on $\bR$ such that $\int |x|^\alpha d\mu(x)<\infty$, then for any $d\in\bN$, the pair $(\phi,\mu^{d+1})$ is universal.
\end{corollary}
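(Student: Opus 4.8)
The plan is to apply Theorem \ref{universal pair} directly, taking the product measure $\mu^{d+1}$ on $\bR^{d+1}$ as the distribution of the inner weights $(\Bw^\intercal,b)$. The hypotheses on $\phi$ (continuous, non-polynomial, and $|\phi(x)|\le C_1|x|^\alpha+C_2$) are identical in both statements, so they carry over verbatim. Thus the entire task reduces to checking that $\mu^{d+1}$ satisfies the two conditions required of the weight distribution in Theorem \ref{universal pair}: full support on $\bR^{d+1}$, and finiteness of the $\alpha$-th moment $\int \|(\Bw^\intercal,b)\|_\infty^\alpha \, d\mu^{d+1}(\Bw^\intercal,b)$.

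For the support condition, I would argue that every nonempty open subset of $\bR^{d+1}$ has positive $\mu^{d+1}$-measure. It suffices to treat an open box $\prod_{i=1}^{d+1}(a_i,b_i)$, since such boxes form a base for the product topology; its measure factorizes as $\prod_{i=1}^{d+1}\mu((a_i,b_i))$, and each factor is strictly positive because $\mu$ has full support on $\bR$. Hence $\mathrm{supp}(\mu^{d+1})=\bR^{d+1}$.

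The only genuine computation is the moment bound, and this is the step I would single out as carrying the (very mild) technical content. Writing a point of $\bR^{d+1}$ as $(x_1,\dots,x_{d+1})$, I would use that $t\mapsto t^\alpha$ is nondecreasing on $[0,\infty)$ for $\alpha\ge 0$ to get $\|(\Bw^\intercal,b)\|_\infty^\alpha=(\max_i|x_i|)^\alpha=\max_i|x_i|^\alpha\le\sum_{i=1}^{d+1}|x_i|^\alpha$. Integrating against $\mu^{d+1}$ and using that each coordinate $x_i$ has marginal law $\mu$ yields $\int\|(\Bw^\intercal,b)\|_\infty^\alpha\,d\mu^{d+1}\le(d+1)\int|x|^\alpha\,d\mu(x)<\infty$ by the hypothesis on $\mu$. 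With both conditions verified, Theorem \ref{universal pair} applies and gives the universality of $(\phi,\mu^{d+1})$. I do not anticipate any real obstacle; the one point that requires a moment's care is that the moment hypothesis is stated in the sup-norm, which is precisely what the elementary bound $\max_i|x_i|^\alpha\le\sum_i|x_i|^\alpha$ resolves.
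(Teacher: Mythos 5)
Your proposal is correct and follows the same route as the paper: apply Theorem \ref{universal pair} to $\mu^{d+1}$, noting that full support is inherited by the product measure, and verify the $\alpha$-moment condition by bounding the sup-norm moment by a sum of coordinate moments. Your single uniform inequality $\|(\Bw^\intercal,b)\|_\infty^\alpha = \max_i |x_i|^\alpha \le \sum_{i}|x_i|^\alpha$ is in fact slightly cleaner than the paper's verification, which splits into the cases $0\le\alpha\le 1$ (via $\|\cdot\|_\infty\le\|\cdot\|_1$ and subadditivity of $t\mapsto t^\alpha$) and $\alpha>1$ (via equivalence of norms on $\bR^{d+1}$), but both arguments produce the same bound, namely a constant multiple of $\int|x|^\alpha\,d\mu(x)<\infty$.
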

\begin{proof}
When $0\le \alpha\le 1$,
\begin{align*}
&\int \| (\Bw^\intercal,b) \|_\infty^\alpha d\mu^{d+1}(\Bw^\intercal,b) \\
\le& \int  \|(\Bw^\intercal,b)\|_1^\alpha d\mu^{d+1}(\Bw^\intercal,b) \\
\le& \int \left(|b|^\alpha + \sum_{i=1}^d |w_i|^\alpha\right) d\mu^{d+1}(\Bw^\intercal,b) <\infty.
\end{align*}
When $\alpha>1$, since all norms of $\bR^{d+1}$ are equivalent,
\begin{align*}
&\int \| (\Bw^\intercal,b) \|_\infty^\alpha d\mu^{d+1}(\Bw^\intercal,b) \\
\le& C \int  \|(\Bw^\intercal,b)\|_\alpha^\alpha d\mu^{d+1}(\Bw^\intercal,b) \\
=& C \int \left(|b|^\alpha + \sum_{i=1}^d |w_i|^\alpha\right) d\mu^{d+1}(\Bw^\intercal,b) <\infty.
\end{align*}
In any cases, the pair $(\phi,\mu^{d+1})$ satisfies the condition in Lemma \ref{universal pair}, hence it is universal.
\end{proof}

So far, we have assumed that $\mu$ has full support. When the activation $\phi(x)=\sigma(x) = \max\{x,0\}$ is the ReLU function, this assumption can be weaken due to the absolute homogeneity of ReLU.

\begin{corollary}\label{universal pair relu}
For the ReLU function $\sigma$, if $\mu$ is a probability distribution on $\bR$ whose support contains the interval $[-r,r]$ for some $r>0$, then $(\sigma,\mu^{d+1})$ is universal for any $d\in\bN$.
\end{corollary}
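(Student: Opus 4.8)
The plan is to follow the proof of Theorem~\ref{universal pair} almost verbatim, inserting two modifications that exploit the positive homogeneity $\sigma(cx)=c\,\sigma(x)$ (for $c\ge 0$) of the ReLU in order to drop \emph{both} the full-support hypothesis and the moment hypothesis. Since $\mu$ may have heavy tails, the integrand $g(\Bw,b)\sigma(\Bw\cdot\Bx+b)$ need not be $\mu^{d+1}$-integrable, so I would only integrate against weight functions supported on the box $[-r,r]^{d+1}$. Concretely, I would work with the linear space
\[
\cH := \left\{ h(\Bx)=\int_{\bR^{d+1}} g(\Bw,b)\,\sigma(\Bw\cdot\Bx+b)\,d\mu^{d+1}(\Bw^\intercal,b) : g\in L^\infty(\mu^{d+1}),\ \operatorname{supp} g\subseteq[-r,r]^{d+1}\right\}.
\]
For such $g$ the integrand vanishes off the box and is bounded on it (because $\cX$ is compact and $\sigma$ is continuous), hence dominated by a constant; so Lemma~\ref{ULLN} applies with respect to $\mu^{d+1}$ with no moment assumption, giving both $\cH\subseteq C(\cX)$ and $\frac1n\sum_{i=1}^n g(\Bw_i,b_i)\sigma(\Bw_i\cdot\Bx+b_i)\to h$ uniformly, $\mu^{d+1}$-almost surely.

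Next I would prove that $\cH$ is dense in $C(\cX)$ via the Hahn--Banach annihilator argument. Suppose $\nu\in M(\cX)$ annihilates $\cH$; Fubini's theorem (justified by the same domination) yields, for every admissible $g$,
\[
\int_{[-r,r]^{d+1}} g(\Bw,b)\,G(\Bw,b)\,d\mu^{d+1}(\Bw^\intercal,b)=0,\qquad G(\Bw,b):=\int_{\cX}\sigma(\Bw\cdot\Bx+b)\,d\nu(\Bx),
\]
so $G=0$ $\mu^{d+1}$-almost everywhere on the box. Since $\operatorname{supp}\mu\supseteq[-r,r]$ forces $\operatorname{supp}\mu^{d+1}\supseteq[-r,r]^{d+1}$, and $G$ is continuous, this upgrades to $G\equiv 0$ on all of $[-r,r]^{d+1}$.

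The crucial step, and the only place the ReLU structure is genuinely used, is extending this from the box to all of $\bR^{d+1}$. For any $(\Bw,b)\neq 0$, set $c=\max\{1,\|(\Bw^\intercal,b)\|_\infty/r\}$ so that $(\Bw,b)/c\in[-r,r]^{d+1}$; then homogeneity gives $G(\Bw,b)=c\,G((\Bw,b)/c)=0$. Hence $\int_\cX \sigma(\Bw\cdot\Bx+b)\,d\nu(\Bx)=0$ for all $(\Bw^\intercal,b)\in\bR^{d+1}$, and Theorem~\ref{universality of NN} (ReLU is continuous and not a polynomial) forces $\nu=0$, so $\cH$ is dense. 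Finally, given $f,\epsilon,\delta$, I would pick $h\in\cH$ with $\sup_\cX|f-h|\le\epsilon/2$, invoke the almost-sure uniform convergence above to obtain $n$ for which $\sup_\cX|\frac1n\sum_i g(\Bw_i,b_i)\sigma(\Bw_i\cdot\Bx+b_i)-h|\le\epsilon/2$ with probability at least $1-\delta$, and conclude via the triangle inequality that (\ref{app of NN}) holds with $a_i=\frac1n g(\Bw_i,b_i)$.

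I expect the main obstacle to be largely bookkeeping: verifying that restricting $\operatorname{supp} g$ to the box keeps the integrand bounded so that Lemma~\ref{ULLN} and Fubini apply with \emph{no} integrability assumption on $\mu$, and checking that the homogeneous rescaling lands back inside $[-r,r]^{d+1}$ (which is precisely where the hypothesis $\operatorname{supp}\mu\supseteq[-r,r]$, rather than full support, is exactly what is needed). Everything else parallels the proof of Theorem~\ref{universal pair}.
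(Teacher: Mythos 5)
Your proof is correct, and it implements the key reduction differently from the paper. The paper clips the \emph{measure}: it introduces the radial retraction $T:\bR^{d+1}\to[-r,r]^{d+1}$ (identity inside the box, rescaling $(\Bw^\intercal,b)$ by $r/\|(\Bw^\intercal,b)\|_\infty$ outside), forms the push-forward $\gamma=T_{\#}\mu^{d+1}$, whose support is exactly $[-r,r]^{d+1}$ and whose bounded support makes the integrability hypotheses of Theorem~\ref{universal pair} automatic, proves that $(\sigma,\gamma)$ is universal (using, exactly as you do, positive homogeneity to extend the annihilation identity $\int_\cX \sigma(\tilde\Bw\cdot\Bx+\tilde b)\,d\nu=0$ from the box to all of $\bR^{d+1}$ before invoking Theorem~\ref{universality of NN}), and then transfers universality back to $(\sigma,\mu^{d+1})$ by observing that a $\mu^{d+1}$-sample $(\Bw_i^\intercal,b_i)$ corresponds to the $\gamma$-sample $T(\Bw_i^\intercal,b_i)$ and that the two random feature spans coincide after rescaling the readout coefficients by the factors $c_i$. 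You instead clip the \emph{coefficient functions}: you keep $\mu^{d+1}$ itself but restrict $g$ to be supported on $[-r,r]^{d+1}$, which makes the integrand bounded so that Lemma~\ref{ULLN} and Fubini apply with no moment condition, and you use homogeneity only once, in the denseness step. Your route is more direct --- no auxiliary measure, no sample correspondence or coefficient rescaling, and the heavy-tail issue is dispatched at the level of the function class $\cH$ rather than by transforming the samples; the paper's route, at the cost of that extra bookkeeping, factors the argument through the cleaner intermediate statement ``$(\sigma,\gamma)$ is universal,'' which is a statement of the same form as Definition~\ref{def universal} and so reuses Theorem~\ref{universal pair} essentially as a black box. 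Both proofs rest on the same two ideas: reduce to the box $[-r,r]^{d+1}$ where integrability is free and the support hypothesis bites, and exploit $\sigma(cx)=c\sigma(x)$ for $c>0$ to propagate the annihilator identity from the box to all parameters.
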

\begin{proof}
We consider the continuous mapping $T:\bR^{d+1} \to [-r,r]^{d+1}$ defined by 
\[
T(\Bw^\intercal,b) = 
\begin{cases}
(\Bw^\intercal,b), \quad &\|(\Bw^\intercal,b)\|_\infty \le r\\
\frac{r}{\|(\Bw^\intercal,b)\|_\infty} (\Bw^\intercal,b) \quad &\|(\Bw^\intercal,b)\|_\infty > r.
\end{cases}
\]
Let $\gamma = T_\# \mu^{d+1}$ be the push-forward measure of $\mu^{d+1}$ under $T$ defined by $\gamma(S) = \mu^{d+1}(T^{-1}(S))$ for any measurable set $S\subseteq [-r,r]^{d+1}$. Then, the support of $\gamma$ is $[-r,r]^{d+1}$ by assumption. 

We firstly show that $(\sigma,\gamma)$ is universal. As in the proof of Theorem \ref{universal pair}, if $\nu \in M(\cX)$ satisfies $\int_{\cX} h(\Bx) d\nu(\Bx) =0$ for all $h\in \cH_\gamma$, then by Fubini's theorem, 
\[
\int_{\cX} \sigma(\tilde{\Bw} \cdot \Bx+ \tilde{b}) d\nu(\Bx) =0 
\]
holds for all $(\tilde{\Bw}^\intercal,\tilde{b})\in [-r,r]^{d+1}$. By the absolute homogeneity of $\sigma$, this equation actually holds for all $(\tilde{\Bw}^\intercal,\tilde{b})\in \bR^{d+1}$. The argument in the proof of Theorem \ref{universal pair} implies that $(\sigma,\gamma)$ is universal.

Observe that any sample $(\Bw_i^\intercal,b_i)$ from $\mu^{d+1}$ corresponding to a sample $(\tilde{\Bw}_i^\intercal,\tilde{b}_i) = T(\Bw_i^\intercal,b_i)$ from $\gamma = T_\#\mu^{d+1}$, and 
\[
\sum_{i=1}^n a_i \phi(\Bw_i \cdot \Bx+b_i) = \sum_{i=1}^n \frac{a_i}{c_i} \phi(\tilde{\Bw}_i \cdot \Bx+\tilde{b}_i),
\]
where $c_i =1$ if $\|(\Bw_i^\intercal,b_i)\|_\infty\le r$ and $c_i = \|(\Bw_i^\intercal,b_i)\|_\infty/r$ otherwise. We conclude that $(\sigma,\mu^{d+1})$ is universal.
\end{proof}

\subsection{Universality of echo state networks}

In this section, we will state and prove the random universal approximation theorem for echo state networks. Our analysis is based on the uniform law of large numbers and the universality of random feed-forward neural networks. For simplicity, we will assume that all internal weights in the network have the same distribution $\mu$ from now on. We make the following assumption on the activation function $\phi:\bR\to \bR$ and the distribution $\mu$, where we also assume that the input $\Bx\in [-2,2]^m$ (any compact set slightly larger than $[-1,1]^m$ is enough for our purpose).

\begin{assumption}\label{assum pair}
For any $m\in \bN$, the pair $(\phi,\mu^{m+1})$ is universal and, for any $\epsilon_0>0$, there exists a measurable mapping $\varphi_{m}:\bR^{m+1} \to \bR^m$ such that 
\begin{equation}\label{recover x}
\sup_{\Bx\in [-2,2]^m} \left\| \Bx - \bE_{(\Bw^\intercal,b) \sim \mu^{m+1}} [\varphi_{m}(\Bw,b)\phi(\Bw \cdot \Bx+b)] \right\|_\infty \le \epsilon_0
\end{equation}
and $\|\varphi_{m}(\Bw,b)\phi(\Bw \cdot \Bx+b)\|_\infty \le g_m(\Bw,b)$ for some $g_m$ satisfying $\bE[g_m(\Bw,b)]<\infty$.
\end{assumption}

Corollary \ref{universal pair co} gives sufficient condition for the pair $(\phi,\mu^{m+1})$ to be universal. Besides, in the proof of Theorem \ref{universal pair}, we have shown that the function class $\cH_\mu$ defined by (\ref{H_mu}) is dense in $C(\cX)$ for any compact set $\cX$. Therefore, if $\phi$ and $\mu$ satisfy Corollary \ref{universal pair co} (so the pair $(\phi,\mu^{m+1})$ satisfies Theorem \ref{universal pair}), then there exists a bounded mapping $\varphi_{m}$ that satisfies (\ref{recover x}) by the denseness of $\cH_\mu$. In summary, any activation $\phi$ and distribution $\mu$ that satisfy the Conditions in corollary \ref{universal pair co} also satisfy Assumption \ref{assum pair}. However, the function $\varphi_{m}$ in (\ref{recover x}) may be difficult to compute for general activation function. We will give explicit construction for the ReLU function in Corollary \ref{universality of relu ESN}.

By Lemma \ref{ULLN}, Assumption \ref{assum pair} ensures that if we have $n$ i.i.d. samples $(\Bw_1^\intercal,b_1),\dots,(\Bw_n^\intercal,b_n)$ from $\mu^{m+1}$, we can approximately reconstruct the input by
\[
\Bx \approx \frac{1}{n}\sum_{i=1}^n\varphi_{m}(\Bw_i,b_i)\phi(\Bw_i \cdot \Bx+b_i).
\]
In other words, the features $\phi(\Bw_i \cdot \Bx+b_i)$ contain enough information of the input and we can approximately recover it using $\varphi_{m}(\Bw_i,b_i)$ as coefficients. For echo state networks, this assumption guarantees that the hidden state does not lose too much information about the history of the input, and hence we can view it as a ``reservoir'' and approximate the desired function at any time steps. We make this idea precise in the next theorem.

\begin{theorem}\label{universality of ESN}
Suppose the activation $\phi:\bR\to\bR$ and the distribution $\mu$ satisfy Assumption \ref{assum pair}. For $n,m,d\in \bN$, let $(\Bw_1^\intercal,b_1),\dots,(\Bw_n^\intercal,b_n)$ be $n$ i.i.d. samples from $\mu^{md+1}$, and define the ESN
\begin{equation}\label{special ESN}
\begin{cases}
\Bs_t = \phi(W (\tfrac{1}{n}P \varphi_{md}(W,\Bb)\Bs_{t-1}+Q\Bu_t)+\Bb), \\
y_t = \Ba \cdot \Bs_t,
\end{cases}
\end{equation}
where $\Bu_t\in \bR^d$, $\Ba,\Bs_t\in\bR^n$, $W=(\Bw_1,\dots,\Bw_n)^\intercal\in \bR^{n\times md}$, $\Bb=(b_1,\dots,b_n)^\intercal \in \bR^n$ and we denote $\varphi_{md}(W,\Bb) :=(\varphi_{md}(\Bw_1,b_1),\dots,\varphi_{md}(\Bw_n,b_n))\in \bR^{md\times n}$ for measurable function $\varphi_{md} :\bR^{md+1} \to \bR^{md}$ and
\begin{equation}\label{matrix PQ}
P= 
\begin{pmatrix}
0_d & I_d & & \\
 & \ddots & \ddots & \\
& & 0_d & I_d \\
& & & 0_d
\end{pmatrix}
\in \bR^{md\times md}, Q=
\begin{pmatrix}
0_d \\
\vdots \\
0_d   \\
I_d
\end{pmatrix}
\in \bR^{md\times d},
\end{equation}
where $0_d$ and $I_d$ are the $d\times d$ zero matrix and identity matrix respectively. Then, for any continuous causal time-invariant operator $F:([-1,1]^d)^\bZ \to \bR^{\bZ}$, any $\epsilon,\delta\in (0,1)$, there exist $n,m\in\bN$ and measurable function $\varphi_{md}$ such that, with probability at least $1-\delta$, the system (\ref{special ESN}) has existence of solutions property and there exists $\Ba \in \bR^n$ so that the corresponding operator $F_{ESN}$ satisfies
\begin{equation}\label{acc}
\sup_{\Bu\in ([-1,1]^d)^\bZ} \|F(\Bu) - F_{ESN}(\Bu)\|_\infty \le\epsilon.
\end{equation}
\end{theorem}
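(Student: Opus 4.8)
The plan is to exploit the block structure of $P$ and $Q$, which turn the hidden state into an approximate tapped delay line. Concretely, I would track the \emph{decoded} state $\Bz_t := \tfrac1n P\varphi_{md}(W,\Bb)\Bs_{t-1}+Q\Bu_t$, show it stores (approximately) the window $(\Bu_{t-m+1},\dots,\Bu_t)$, and observe that the readout $\Ba\cdot\Bs_t=\sum_i a_i\phi(\Bw_i\cdot\Bz_t+b_i)$ is then a random feature network evaluated on this window. First I would reduce the operator problem to a function problem: since $F$ is continuous, Proposition~\ref{continuity}(2) gives it approximately finite memory, so I fix $m=m_F(\epsilon/3)$, whence $\cE_F(m)\le\epsilon/3$ and $|F^*(\Bu_-)-F^*_m(\Bu_{-m+1:0})|\le\epsilon/3$ for every $\Bu_-$. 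I extend $F^*_m$ from $[-1,1]^{md}$ to a continuous $\bar F$ on $[-2,2]^{md}$ (Tietze), since the decoded state may leave the unit cube.

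The heart of the argument is the error analysis of the delay line. Writing $\Bz=(\Bz^{(1)},\dots,\Bz^{(m)})$ in $d$-blocks, the definitions (\ref{matrix PQ}) give $\Bz_t^{(m)}=\Bu_t$ \emph{exactly} and $\Bz_t^{(j)}=\bigl(\tfrac1n\varphi_{md}(W,\Bb)\Bs_{t-1}\bigr)^{(j+1)}$ for $j<m$. Because $\Bs_{t-1}=\phi(W\Bz_{t-1}+\Bb)$ componentwise, Assumption~\ref{assum pair} combined with Lemma~\ref{ULLN} yields, with high probability on $(W,\Bb)$ and uniformly over $\Bz_{t-1}\in[-2,2]^{md}$,
\[
\Bigl\|\tfrac1n\varphi_{md}(W,\Bb)\phi(W\Bz_{t-1}+\Bb)-\Bz_{t-1}\Bigr\|_\infty\le \epsilon_1,
\]
where $\epsilon_1$ is driven below any threshold by taking $\epsilon_0$ small in (\ref{recover x}) and $n$ large. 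Comparing to the ideal window $\tilde\Bz_t=(\Bu_{t-m+1},\dots,\Bu_t)$, the blockwise errors $e_t^{(j)}:=\|\Bz_t^{(j)}-\Bu_{t-m+j}\|_\infty$ satisfy $e_t^{(m)}=0$ and $e_t^{(j)}\le\epsilon_1+e_{t-1}^{(j+1)}$, hence $e_t^{(j)}\le(m-j)\epsilon_1\le(m-1)\epsilon_1$ for all $t$. This is the crucial point: the shift in $P$ discards the oldest block each step, so reconstruction error does \emph{not} accumulate in time. Choosing $\epsilon_1$ with $(m-1)\epsilon_1\le1$ also keeps $\Bz_t\in[-2,2]^{md}$, closing the induction that licenses the bound above.

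Because $\phi$ may be unbounded, existence of solutions on $\bZ_-$ is not automatic, so I would build it by a compactness argument: for each $N$, initialize the state at time $-N$ by the exact window $\tilde\Bz_{-N}$ and run the recursion forward; the previous step keeps every $\Bz_t^{(N)}\in[-2,2]^{md}$, so $\Bs_t^{(N)}$ lies in the fixed cube $[-M,M]^n$ with $M=\sup_{\Bz\in[-2,2]^{md}}\|\phi(W\Bz+\Bb)\|_\infty$. A diagonal subsequence converges for every $t\le0$, and continuity of $\phi$ together with linearity of $\Bs_{t-1}\mapsto\tfrac1n P\varphi_{md}(W,\Bb)\Bs_{t-1}$ shows the limit $\Bs^\ast$ is a genuine solution with $\|\Bz_0^\ast-\tilde\Bz_0\|_\infty\le(m-1)\epsilon_1$; I set $F_{ESN}^\ast(\Bu_-):=\Ba\cdot\Bs^\ast(\Bu_-)_0$. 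For the readout, Assumption~\ref{assum pair} gives universality of $(\phi,\mu^{md+1})$, so with high probability there is $\Ba$ with $\sup_{\Bz\in[-2,2]^{md}}|\bar F(\Bz)-\sum_i a_i\phi(\Bw_i\cdot\Bz+b_i)|\le\epsilon/3$. Combining the three errors,
\[
|F^*(\Bu_-)-\Ba\cdot\Bs^\ast(\Bu_-)_0|\le \underbrace{\cE_F(m)}_{\le\epsilon/3}+\underbrace{\omega_{\bar F}\bigl((m-1)\epsilon_1\bigr)}_{\le\epsilon/3}+\underbrace{\epsilon/3}_{\text{universality}},
\]
where the middle term uses $\bar F(\tilde\Bz_0)=F^*_m(\Bu_{-m+1:0})$ and $\|\Bz_0^\ast-\tilde\Bz_0\|_\infty\le(m-1)\epsilon_1$. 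Taking the supremum over $\Bu_-$ and using the $F\leftrightarrow F^\ast$ correspondence yields (\ref{acc}); a union bound over the two high-probability events (reconstruction and readout) costs $\delta$.

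The hard part will be the middle two steps: making rigorous that the hidden state behaves as a delay line whose reconstruction error stays bounded in $t$, and promoting this to an honest solution on $\bZ_-$ when $\phi$ is unbounded. The probabilistic reconstruction of Assumption~\ref{assum pair} only holds on $[-2,2]^{md}$, so the argument must \emph{simultaneously} control the error and certify that the decoded states never leave that cube; the exactness of the last block $\Bz_t^{(m)}=\Bu_t$ and the one-step memory loss induced by $P$ are precisely what make this self-consistent and prevent error blow-up.
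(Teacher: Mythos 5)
Your proposal is correct, and its skeleton coincides with the paper's proof: choose $m=m_F(\epsilon/3)$, extend $F^*_m$ continuously to $[-2,2]^{md}$, obtain the readout $\Ba$ from universality of $(\phi,\mu^{md+1})$, combine Assumption \ref{assum pair} with Lemma \ref{ULLN} to get a uniform reconstruction error $\epsilon_1$ on $[-2,2]^{md}$, propagate the blockwise errors $e_t^{(j)}\le (m-j)\epsilon_1$ using the exactness of the last block and the shift structure of $P$ (this is precisely the induction in the paper's Step 2), and finish with the same three-term triangle inequality. Where you genuinely diverge is the existence-of-solutions step. The paper fixes $\Bu$, regards the decoded-state update as an operator $\psi$ on the compact convex set $([-\tfrac{3}{2},\tfrac{3}{2}]^{md})^\bZ$, shows that the $m$-fold composition $\psi^{(m)}$ maps this set into itself, invokes Schauder's fixed point theorem, and then upgrades a fixed point of $\psi^{(m)}$ to a fixed point of $\psi$ via the cyclic definition $\widetilde{\Br}_t=\psi^{(j_t)}(\Br^*)_t$. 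You instead initialize the recursion at time $-N$ with the exact window, run it forward (the exact initialization makes the error invariant propagate trivially), and extract a solution on $\bZ_-$ by a Cantor diagonal argument, passing the recursion and the (closed) error bounds to the limit by continuity of $\phi$; this needs only finite-dimensional compactness rather than an infinite-dimensional fixed point theorem, and it avoids both the paper's two-tier induction bounds ($\tfrac{3}{2}+\tfrac{j}{m}\omega_{F^*_m}^{-1}(\epsilon/3)$ for old blocks versus $\tfrac{m-k}{m}\omega_{F^*_m}^{-1}(\epsilon/3)$ for recent ones) and the $\psi^{(m)}$-to-$\psi$ conversion. What Schauder buys the paper is a solution on all of $\bZ$ produced in one shot as a genuine fixed point; what your construction buys is a more elementary argument whose solution arises as a limit of forward trajectories started in the remote past, which matches the definition of the existence-of-solutions property on $\bZ_-$ and resonates with the weak echo state property of Remark \ref{weak ESP}. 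Two bookkeeping points to make explicit when writing it up: fix $\epsilon_1$ with $(m-1)\epsilon_1\le\min\{1,\omega_{\bar{F}}^{-1}(\epsilon/3)\}$ \emph{before} choosing $n$, which is possible since your extension $\bar{F}$ is uniformly continuous on the compact cube; and realize the reconstruction and readout events for the same $n=\max\{N_1,N_2\}$ by a union bound, using the monotonicity-in-$n$ argument from the remark following Definition \ref{def universal}.
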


\begin{proof}
The proof is divided into three steps.

\textbf{Step 1}: Choose parameters $(W,\Bb)$. 

We choose $m= m_F(\epsilon/3)$, then by the definition of $m_F(\epsilon/3)$ and $F^*_m$, for any $\Bu\in ([-1,1]^d)^\bZ$ and $t\in \bZ$, 
\[
|F(\Bu)_t- F^*_m(\Bu_{t-m+1:t})| \le \epsilon/3.
\]
Since $F^*_m:[-1,1]^{md} \to \bR$ is continuous by Proposition \ref{continuity}, we can continuously extend its domain to $[-2,2]^{md}$ by setting $F(x_1,\dots,x_{md}) = F(\tilde{x}_1,\dots,\tilde{x}_{md})$ where $\tilde{x}_i=\max\{\min\{x_i,1\},-1\}$.
By Assumption \ref{assum pair}, the pair $(\phi,\mu^{md+1})$ is universal. Hence, there exists $N_1\in \bN$ such that for any $n\ge N_1$, with probability at least $1-\delta/2$,
\begin{equation}\label{Fm app}
\sup_{\Bx\in [-2,2]^{md}} \left| F^*_m(\Bx) - \sum_{i=1}^n a_i \phi(\Bw_i \cdot \Bx+b_i) \right| \le \epsilon/3,
\end{equation}
for some $\Ba=(a_1,\dots,a_n)^\intercal \in \bR^n$.

By Assumption \ref{assum pair}, there exists $\varphi_{md}$ such that
\begin{multline*}
\sup_{\Bx\in [-2,2]^{md}} \left\| \Bx - \bE[\varphi_{md}(\Bw,b)\phi(\Bw \cdot \Bx+b)] \right\|_\infty \\
\le \frac{1}{2m} \omega_{F^*_m}^{-1}(\epsilon/3).
\end{multline*}
Lemma \ref{ULLN} implies that, 
\begin{multline*}
\sup_{\Bx\in [-2,2]^{md}}\Bigg\|\bE[\varphi_{md}(\Bw,b)\phi(\Bw \cdot \Bx+b)] - \Bigg. \\
\left. \frac{1}{n}\sum_{i=1}^n\varphi_{md}(\Bw_i,b_i)\phi(\Bw_i \cdot \Bx+b_i)\right\|_\infty \to 0,
\end{multline*}
almost surely as $n\to \infty$. Thus, there exists $N_2\in \bN$ such that for any $n\ge N_2$, with probability at least $1-\delta/2$,
\begin{multline*}
\sup_{\Bx\in [-2,2]^{md}}\left\|\Bx - \frac{1}{n}\sum_{i=1}^n\varphi_{md}(\Bw_i,b_i)\phi(\Bw_i \cdot \Bx+b_i)\right\|_\infty \\
\le \frac{1}{m} \omega_{F^*_m}^{-1}(\epsilon/3),
\end{multline*}
which can be written as, 
\begin{equation}\label{reconstruct app}
\sup_{\Bx\in [-2,2]^{md}}\left\|\Bx - \frac{1}{n} \varphi_{md}(W,\Bb) \phi(W \Bx+\Bb) \right\|_\infty \le \frac{1}{m} \omega_{F^*_m}^{-1}(\epsilon/3).
\end{equation}

Without loss of generality, we can assume that $\omega_{F^*_m}^{-1}(\epsilon/3)\le 1/2$ by setting $\epsilon$ sufficiently small. Now, we choose $n= \max\{N_1,N_2\}$, then with probability at least $1-\delta$, (\ref{Fm app}) and (\ref{reconstruct app}) hold. We will fix such parameters $(W,\Bb)$ that satisfy (\ref{Fm app}) and (\ref{reconstruct app}). 

\textbf{Step 2}: Establish the existence of solutions property. 

For any fixed $\Bu \in ([-1,1]^d)^{\bZ}$, we consider the operator $\psi:(\bR^{md})^\bZ \to (\bR^{md})^\bZ$ defined by
\[
\psi(\Br)_t := \tfrac{1}{n}P \varphi_{md}(W,\Bb)\phi(W \Br_{t-1}+\Bb)+Q\Bu_t \in \bR^{md}.
\]
For convenience, we will regard any vector in $\bR^{md}$ as $m$ vectors in $\bR^d$. To do this, for any $i=1,\dots,m$, we denote the $(i-1)d+1$ to $id$ entries of $\psi(\Br)_t$ by $\psi(\Br)_t[i]\in \bR^d$. Then, by the definition of matrices $P$ and $Q$, $\psi(\Br)_t[m] = \Bu_t$ and $\psi(\Br)_t[i] = (\tfrac{1}{n} \varphi_{md}(W,\Bb)\phi(W \Br_{t-1}+\Bb)) [i+1]$ for $1\le i\le m-1$. 

For $j=1,\dots,m$, we define the $j$ composition of $\psi$ inductively by $\psi^{(j)} = \psi \circ \psi^{(j-1)}$, where $\psi^{(0)}$ is the identity map. We are going to show that $\psi^{(m)}$ maps the compact set $([-\frac{3}{2},\frac{3}{2}]^{md})^\bZ$ into itself and then use fixed point theorem to establish the existence of solutions property. 
To do this, we assert that,  if $\Br\in ([-\tfrac{3}{2},\tfrac{3}{2}]^{md})^\bZ$, then, for $1\le i \le m-j$,
\[
\left\|\psi^{(j)}(\Br)_t[i] \right\|_\infty \le \tfrac{3}{2}+\tfrac{j}{m}\omega_{F^*_m}^{-1}(\epsilon/3),
\]
and for $m-j<k\le m$,
\[
\left\|\psi^{(j)}(\Br)_t[k] - \Bu_{t-m+k} \right\|_\infty \le \tfrac{m-k}{m}\omega_{F^*_m}^{-1}(\epsilon/3).
\]
We prove this by induction. When $j=1$, $\psi(\Br)_t[m] = \Bu_t \in [-1,1]^d$ and, by inequality (\ref{reconstruct app}), for $1\le i\le m-1$,
\begin{align*}
\left\|\psi(\Br)_t[i] \right\|_\infty \le& \left\|\Br_{t-1}[i+1] \right\|_\infty + \frac{1}{m} \omega_{F^*_m}^{-1}(\epsilon/3) \\
\le& \frac{3}{2}+ \frac{1}{m} \omega_{F^*_m}^{-1}(\epsilon/3).
\end{align*}
Hence, the assertion is true for $j=1$. 
Assume that the assertion is true for some $1\le j\le m-1$, we are going to show that it is true for $j+1$. Since $\psi^{(j)}(\Br)_t \in [-2,2]^{md}$, by inequality (\ref{reconstruct app}), for $1\le i \le m-1$,
\[
\left\|\psi^{(j+1)}(\Br)_t[i] - \psi^{(j)}(\Br)_{t-1}[i+1] \right\|_\infty \le \frac{1}{m} \omega_{F^*_m}^{-1}(\epsilon/3),
\]
and $\psi^{(j+1)}(\Br)_t[m] = \Bu_t$. Hence, by induction assumption, for $1\le i \le m-j-1$, 
\begin{align*}
\left\|\psi^{(j+1)}(\Br)_t[i] \right\|_\infty \le& \left\|\psi^{(j)}(\Br)_{t-1}[i+1]\right\|_\infty + \frac{1}{m} \omega_{F^*_m}^{-1}(\epsilon/3) \\
\le& \frac{3}{2}+ \frac{j+1}{m} \omega_{F^*_m}^{-1}(\epsilon/3),
\end{align*}
and for $m-j-1<k\le m-1$,
\begin{align*}
&\left\|\psi^{(j+1)}(\Br)_t[k] - \Bu_{t-m+k}\right\|_\infty \\
\le& \left\|\psi^{(j)}(\Br)_{t-1}[k+1] - \Bu_{t-1-m+k+1}\right\|_\infty + \frac{1}{m} \omega_{F^*_m}^{-1}(\epsilon/3) \\
\le& \frac{m-(k+1)}{m} \omega_{F^*_m}^{-1}(\epsilon/3) + \frac{1}{m} \omega_{F^*_m}^{-1}(\epsilon/3) \\
=& \frac{m-k}{m} \omega_{F^*_m}^{-1}(\epsilon/3).
\end{align*}
Hence, the assertion is true and we conclude that 
\[
\left\| \psi^{(m)} (\Br)_t - \Bu_{t-m+1:t} \right\|_\infty \le \omega_{F^*_m}^{-1}(\epsilon/3) \le \frac{1}{2}
\]
holds for $\Br\in ([-\tfrac{3}{2},\tfrac{3}{2}]^{md})^\bZ$, where we regard $\Bu_{t-m+1:t}\in ([-1,1]^d)^m$ as the vector $(\Bu_{t-m+1}^\intercal,\dots,\Bu_t^\intercal)^\intercal \in \bR^{md}$ to simplify the notation. This inequality implies that $\psi^{(m)}$ maps the compact convex set $([-\frac{3}{2},\frac{3}{2}]^{md})^\bZ$ into itself. By the continuity of $\psi^{(m)}$ and Schauder's Fixed Point Theorem (see \cite[Theorem 7.1]{shapiro2016fixed}), $\psi^{(m)}$ has at least a fixed point, that is, a point $\Br^* \in ([-\frac{3}{2},\frac{3}{2}]^{md})^\bZ$ such that $\psi^{(m)}(\Br^*) = \Br^*$. Note that the fixed point $\Br^*$ depends on $\Bu$ because the operator $\psi$ is defined through $\Bu$.

Now, we are ready to establish the existence of solution property for the system (\ref{special ESN}). For any $\Bu \in ([-1,1]^d)^{\bZ}$, we define $\widetilde{\Br} \in ([-2,2]^{md})^\bZ$ by $\widetilde{\Br}_t = \psi^{(j_t)}(\Br^*)_t$, where $j_t=t-\lfloor t/m \rfloor m \in \{0,1,\dots,m-1\}$. If $j_t=0$, then $\widetilde{\Br}_t = \Br^*_t$, $\widetilde{\Br}_{t-1} = \psi^{(m-1)}(\Br^*)_{t-1}$ and 
\begin{align*}
\psi(\widetilde{\Br})_t &= \tfrac{1}{n}P \varphi_{md}(W,\Bb)\phi(W \widetilde{\Br}_{t-1}+\Bb)+Q\Bu_t \\
&= \tfrac{1}{n}P \varphi_{md}(W,\Bb)\phi(W \psi^{(m-1)}(\Br^*)_{t-1}+\Bb)+Q\Bu_t \\
&= \psi(\psi^{(m-1)}(\Br^*))_t = \psi^{(m)}(\Br^*)_t \\
&= \Br^*_t = \widetilde{\Br}_t.
\end{align*}
If $j_t=j\in \{1,2,\dots,m-1\}$, then $\widetilde{\Br}_t = \psi^{(j)}(\Br^*)_t$, $\widetilde{\Br}_{t-1} = \psi^{(j-1)}(\Br^*)_{t-1}$ and
\begin{align*}
\psi(\widetilde{\Br})_t &= \tfrac{1}{n}P \varphi_{md}(W,\Bb)\phi(W \widetilde{\Br}_{t-1}+\Bb)+Q\Bu_t \\
&= \tfrac{1}{n}P \varphi_{md}(W,\Bb)\phi(W \psi^{(j-1)}(\Br^*)_{t-1}+\Bb)+Q\Bu_t \\
&= \psi(\psi^{(j-1)}(\Br^*))_t = \psi^{(j)}(\Br^*)_t \\
&= \widetilde{\Br}_t.
\end{align*}
In any cases, we have $\psi(\widetilde{\Br})_t = \widetilde{\Br}_t$, or equivalently,
\[
\widetilde{\Br}_t = \tfrac{1}{n}P \varphi_{md}(W,\Bb)\phi(W \widetilde{\Br}_{t-1}+\Bb)+Q\Bu_t.
\]
Since $\widetilde{\Br}$ is a fixed point of $\psi$, it is also a fixed point of $\psi^{(m)}$ and hence
\begin{align*}
\left\| \widetilde{\Br}_t - \Bu_{t-m+1:t}\right\|_\infty =& \left\| \psi^{(m)} (\widetilde{\Br})_t - \Bu_{t-m+1:t} \right\|_\infty \\
\le& \omega_{F^*_m}^{-1}(\epsilon/3),
\end{align*}
where we regard $\Bu_{t-m+1:t}$ as a vector of $\bR^{md}$ as before.

Let $\Bs_t := \phi(W \widetilde{\Br}_t+\Bb)$, then $\Bs$ satisfies the equation 
\[
\Bs_t = \phi(W (\tfrac{1}{n}P \varphi_{md}(W,\Bb)\Bs_{t-1}+Q\Bu_t)+\Bb).
\]
Hence, the system (\ref{special ESN}) has the existence of solution property with probability at least $1-\delta$. 

\textbf{Step 3}: Bound the approximation error.

For the chosen parameters $(W,\Bb)$, we let $\Ba = (a_1, \dots, a_n)^\intercal\in \bR^n$ be the vector that satisfies inequality (\ref{Fm app}), then
\[
\sup_{\Bx\in [-2,2]^{md}} \left| F^*_m(\Bx) - \Ba \cdot \phi(W \Bx+\Bb) \right| \le \epsilon/3.
\]
Since the system (\ref{special ESN}) has existence of solution property, we can associate a causal time-invariant operator $F_{ESN}$ to the system (\ref{special ESN}) such that
\[
F_{ESN}(\Bu)_t = \Ba \cdot \Bs_t = \Ba \cdot \phi(W \widetilde{\Br}_t+\Bb), \quad \Bu\in ([-1,1]^d)^\bZ.
\]
Therefore, for any $\Bu\in ([-1,1]^d)^\bZ$, we have $\| \widetilde{\Br}_t - \Bu_{t-m+1:t}\|_\infty \le \omega_{F^*_m}^{-1}(\epsilon/3)\le 1/2$ and 
\begin{align*}
&|F_{ESN}(\Bu)_t - F(\Bu)_t| = |\Ba \cdot \phi(W \widetilde{\Br}_t+\Bb) - F(\Bu)_t| \\
\le& |\Ba \cdot \phi(W \widetilde{\Br}_t+\Bb) - F^*_m(\widetilde{\Br}_t)| + |F^*_m(\widetilde{\Br}_t) - F^*_m(\Bu_{t-m+1:t})| \\
&\quad + |F^*_m(\Bu_{t-m+1:t}) - F(\Bu)_t| \\
\le& \epsilon/3 + \epsilon/3 +\epsilon/3 = \epsilon,
\end{align*}
which completes the proof.
\end{proof}

Theorem \ref{universality of ESN} establishes the random universal approximation for a special form of echo state networks. But we are  only able to prove the existence of solutions property for the constructed network. We note that the echo state property is established for certain ReLU neural networks constructed in \cite{gonon2023approximation,gonon2021fading}. It would be interesting to see whether this property also holds for the construction in Theorem \ref{universality of ESN} (it seems that some regularity assumption on the activation function is needed). In the following remark, we show that the networks constructed in the proof of Theorem \ref{universality of ESN} satisfy a property slightly weaker than the echo state property.

\begin{remark}\label{weak ESP}
Recall that the echo state property can be formulated in a forward version \cite[Theorem 2.1]{yildiz2012re}: there exits a sequence $\delta_{0:\infty}$ such that $\delta_t \to 0$ as $t\to \infty$ and, for any state sequences $\Bs_{0:\infty}$ and $\Bs'_{0:\infty}$ generated by the system (\ref{special ESN}) with the same input $\Bu_{1:\infty}$ and different initial states $\Bs_0$ and $\Bs_0'$ respectively, it holds $\|\Bs_t - \Bs'_t\|_\infty \le \delta_t$. Thus, the effect of the initial state vanishes as $t\to \infty$. In the proof of Theorem \ref{universality of ESN}, we have showed that $\Bs_t = \phi(W \widetilde{\Br}_t+\Bb)$ with $\| \widetilde{\Br}_t - \Bu_{t-m+1:t}\|_\infty \le \omega_{F^*_m}^{-1}(\epsilon/3)$, which can be smaller than any prescribed accuracy $\epsilon_0$ by setting $\epsilon$ small enough. Hence, for any fixed sample $(W,\Bb)$, the difference of the states $\|\Bs_t - \Bs'_t\|_\infty$ is bounded for large $t$, by the continuity of the activation $\phi$. For example, if $\phi$ is Lipschitz continuity with Lipschitz constant $L$, then $\|\Bs_t - \Bs'_t\|_\infty \le \| \phi(W \widetilde{\Br}_t+\Bb) - \phi(W \widetilde{\Br}'_t+\Bb) \|_\infty \le 2L \|W\| \epsilon_0$ by the triangle inequality, where $\|W\|$ denotes the operator norm induced by $\|\cdot\|_\infty$. Furthermore, the output sequences $y_t= \Ba \cdot \Bs_t$ and $y'_t= \Ba' \cdot \Bs'_t$ computed by the corresponding operators constructed in Theorem \ref{universality of ESN} satisfy the same approximation accuracy (\ref{acc}) for large $t$, which means the outputs are closed to each other. So, in our construction, the effect of the initial state is small.
\end{remark}

Let us take a closer look at the result of Theorem \ref{universality of ESN}. For a given pair of activation and distribution $(\phi,\mu)$, it is shown that if the internal weights are sampled as in (\ref{special ESN}), then the echo state network is a universal approximator with high probability. Notice that the sampling procedure in (\ref{special ESN}) depends on the function $\varphi_{md}$, which is implicitly given in Assumption \ref{assum pair}. As mentioned above, if the pair $(\phi,\mu)$ satisfies Corollary \ref{universal pair co}, we are guaranteed the existence of $\varphi_{md}$. However, it may be difficult to compute it explicitly. In the following, we show how to compute $\varphi_{md}$ for the widely used ReLU activation $\phi=\sigma$, when the distribution $\mu$ is symmetric.

\begin{proposition}\label{relu reconstruct input}
Let $\mu$ be a symmetric distribution on $\bR$ with variance $\int x^2 d\mu(x)=M_2<\infty$. Then, for the ReLU function $\sigma$ and any $m\in \bN$,
\[
\bE_{(\Bw^\intercal,b) \sim \mu^{m+1}} [\Bw \sigma(\Bw \cdot \Bx+b)] = \tfrac{M_2}{2} \Bx.
\]
\end{proposition}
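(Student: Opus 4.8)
The plan is to compute the vector componentwise and to exploit both the symmetry of $\mu$ and the identity $\sigma(t)=\tfrac12(t+|t|)$. Fix $\Bx\in\bR^m$ and a coordinate $k\in\{1,\dots,m\}$; the $k$-th entry of the claimed expectation is $\bE[w_k\,\sigma(\Bw\cdot\Bx+b)]$, the expectation being over $(\Bw^\intercal,b)\sim\mu^{m+1}$. Before any manipulation I would verify integrability: since $|\sigma(t)|\le|t|$ and $\mu$ has finite second moment $M_2$, the Cauchy--Schwarz inequality gives $\bE[|w_k|\,|\Bw\cdot\Bx+b|]\le \sum_i|x_i|\,\bE[|w_kw_i|]+\bE[|w_kb|]<\infty$, so every integral below is well defined and the splitting is legitimate.

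Writing $\sigma(\Bw\cdot\Bx+b)=\tfrac12\big((\Bw\cdot\Bx+b)+|\Bw\cdot\Bx+b|\big)$ separates the quantity into a linear term and an absolute-value term:
\[
\bE[w_k\,\sigma(\Bw\cdot\Bx+b)]=\tfrac12\,\bE[w_k(\Bw\cdot\Bx+b)]+\tfrac12\,\bE[w_k|\Bw\cdot\Bx+b|].
\]
The crux of the argument, and the step I expect to be the main obstacle, is showing that the second term vanishes. The key idea is to use the global sign flip $(\Bw,b)\mapsto(-\Bw,-b)$, under which the product measure $\mu^{m+1}$ is invariant because each factor is symmetric. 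This map sends $w_k\mapsto-w_k$ while leaving $|\Bw\cdot\Bx+b|$ unchanged, since its argument merely changes sign. Thus the integrand $w_k|\Bw\cdot\Bx+b|$ is odd with respect to a measure-preserving involution, so its expectation equals its own negative and is therefore $0$. (Note that the naive single-coordinate flip $w_k\mapsto-w_k$ does \emph{not} work, as it alters $|\Bw\cdot\Bx+b|$; the correct symmetry is the simultaneous negation of all coordinates, including $b$.)

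For the linear term, symmetry of $\mu$ together with finiteness of the second moment forces $\bE[w_i]=0$, and independence of the coordinates gives $\bE[w_kw_i]=0$ for $i\ne k$, $\bE[w_k^2]=M_2$, and $\bE[w_kb]=0$. Expanding then yields $\bE[w_k(\Bw\cdot\Bx+b)]=\sum_i x_i\,\bE[w_kw_i]+\bE[w_kb]=M_2\,x_k$, so that $\bE[w_k\,\sigma(\Bw\cdot\Bx+b)]=\tfrac{M_2}{2}x_k$. Collecting the $m$ coordinates produces the vector identity $\bE[\Bw\,\sigma(\Bw\cdot\Bx+b)]=\tfrac{M_2}{2}\Bx$, as claimed.
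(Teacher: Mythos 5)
Your proof is correct and is essentially the paper's own argument in a slightly repackaged form: the paper exploits the sign flip $(\Bw,b)\mapsto(-\Bw,-b)$ together with $\sigma(t)-\sigma(-t)=t$, which is algebraically the same as your decomposition $\sigma(t)=\tfrac12(t+|t|)$ plus showing $\bE[w_k|\Bw\cdot\Bx+b|]=0$ by the same involution. Your explicit integrability check via Cauchy--Schwarz is a small addition in rigor but does not change the route.
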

\begin{proof}
Let us denote $\Bw = (w_1,\dots,w_m)^\intercal$ and $\Bx = (x_1,\dots,x_m)^\intercal$. We need to show that $2\bE[w_j\sigma(\Bw \cdot \Bx+b)] = x_j M_2$ for any $j=1,\dots,m$. By symmetry, 
\[
\bE [w_j\sigma(\Bw \cdot \Bx+b)]  = -\bE [w_j\sigma(-\Bw \cdot \Bx - b)].
\]
Using the equality $\sigma(x) - \sigma(-x) = x$ for the ReLU function, we have
\begin{align*}
&2 \bE [w_j \sigma(\Bw \cdot \Bx+b)] \\
=& \bE [w_j \sigma(\Bw \cdot \Bx+b)] - \bE [w_j \sigma(-\Bw \cdot \Bx-b)] \\
=& \bE [w_j (w_1 x_1+ \dots + w_m x_m +b)] \\
=& x_j \bE [w_j^2] = x_j M_2,
\end{align*}
where we use the fact that $\bE w_jw_ix_i = \bE w_j \bE w_ix_i = 0$ for $i\neq j$, because the random variables $w_i$ and $w_j$ are independent and symmetric.
\end{proof}

By Proposition \ref{relu reconstruct input}, for the ReLU function $\sigma$ and symmetric distribution $\mu$, we can choose $\varphi_{m}(\Bw,b) = \frac{2}{M_2} \Bw$ in Assumption \ref{assum pair} with error $\epsilon_0=0$. As a corollary, we can obtain the random universal approximation theorem for ReLU echo state networks under very weak assumption on the distribution $\mu$.

\begin{corollary}\label{universality of relu ESN}
Let $\mu$ be a symmetric probability distribution on $\bR$ whose support contains the interval $[-r,r]$ for some $r>0$, and variance $\int x^2 d\mu(x)=M_2<\infty$. For $n,m,d\in\bN$, let $(\Bw_1^\intercal,b_1),\dots,(\Bw_n^\intercal,b_n)$ be $n$ i.i.d. samples from $\mu^{md+1}$, and define the ReLU ESN
\begin{equation}\label{relu ESN}
\begin{cases}
\Bs_t = \sigma(W (\tfrac{2}{n M_2}PW^\intercal \Bs_{t-1}+Q\Bu_t)+\Bb), \\
y_t = \Ba \cdot \Bs_t,
\end{cases}
\end{equation}
where $\Bu_t\in \bR^d$, $\Ba,\Bs_t\in\bR^n$, $W=(\Bw_1,\dots,\Bw_n)^\intercal\in \bR^{n\times md}$, $\Bb=(b_1,\dots,b_n)^\intercal \in \bR^n$ and $P,Q$ are defined by (\ref{matrix PQ}). Then, for any continuous causal time-invariant operator $F:([-1,1]^d)^\bZ \to \bR^{\bZ}$, any $\epsilon,\delta\in (0,1)$, there exist $n,m\in\bN$ such that, with probability at least $1-\delta$, the system (\ref{relu ESN}) has existence of solutions property and there exists $\Ba \in \bR^n$ so that the corresponding operator $F_{ESN}$ satisfies
\[
\sup_{\Bu\in ([-1,1]^d)^\bZ} \|F(\Bu) - F_{ESN}(\Bu)\|_\infty \le\epsilon.
\]
\end{corollary}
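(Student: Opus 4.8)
The plan is to recognize this corollary as a direct specialization of Theorem \ref{universality of ESN} to the ReLU activation, where the abstract reconstruction mapping $\varphi_{md}$ becomes fully explicit thanks to Proposition \ref{relu reconstruct input}. The only genuine work is to verify that the pair $(\sigma,\mu)$ satisfies Assumption \ref{assum pair}; once this is done, both the existence of solutions property and the readout vector $\Ba$ achieving the $L^\infty$ bound follow verbatim from the theorem.

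First I would check the universality clause of Assumption \ref{assum pair}. Since $\sigma$ is the ReLU function and the support of $\mu$ contains some interval $[-r,r]$, Corollary \ref{universal pair relu} directly gives that $(\sigma,\mu^{md+1})$ is universal for every $d,m\in\bN$. Next I would produce the reconstruction mapping: by Proposition \ref{relu reconstruct input}, choosing $\varphi_{md}(\Bw,b)=\tfrac{2}{M_2}\Bw$ yields $\bE_{(\Bw^\intercal,b)\sim\mu^{md+1}}[\varphi_{md}(\Bw,b)\sigma(\Bw\cdot\Bx+b)]=\Bx$ exactly, so the condition (\ref{recover x}) holds with error $\epsilon_0=0$.

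The remaining clause to verify is the domination hypothesis $\|\varphi_{md}(\Bw,b)\sigma(\Bw\cdot\Bx+b)\|_\infty\le g_{md}(\Bw,b)$ with $\bE[g_{md}]<\infty$. For $\Bx\in[-2,2]^{md}$ one has $|\sigma(\Bw\cdot\Bx+b)|\le\|\Bw\|_1\|\Bx\|_\infty+|b|\le 2\|\Bw\|_1+|b|$, so I would take $g_{md}(\Bw,b)=\tfrac{2}{M_2}\|\Bw\|_\infty(2\|\Bw\|_1+|b|)$. This is a finite sum of products of at most two coordinates of $(\Bw^\intercal,b)$; since each coordinate is drawn i.i.d.\ from $\mu$ with finite second moment $M_2<\infty$, every such product has finite expectation (by Cauchy--Schwarz or the elementary bound $|w_iw_j|\le\tfrac12(w_i^2+w_j^2)$), whence $\bE[g_{md}]<\infty$. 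With all three clauses of Assumption \ref{assum pair} confirmed, the hypotheses of Theorem \ref{universality of ESN} are met.

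Finally I would substitute the explicit $\varphi_{md}$ into the generic network (\ref{special ESN}). Since $W=(\Bw_1,\dots,\Bw_n)^\intercal$, the stacked matrix $\varphi_{md}(W,\Bb)=(\varphi_{md}(\Bw_1,b_1),\dots,\varphi_{md}(\Bw_n,b_n))=\tfrac{2}{M_2}W^\intercal$, so $\tfrac{1}{n}P\varphi_{md}(W,\Bb)=\tfrac{2}{nM_2}PW^\intercal$ and the system (\ref{special ESN}) collapses precisely to (\ref{relu ESN}). Thus Theorem \ref{universality of ESN} applies word for word to (\ref{relu ESN}), yielding the claim with probability at least $1-\delta$. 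I expect no real obstacle here beyond the routine integrability check: the substantive analytic content---the Schauder fixed-point construction establishing existence of solutions and the three-term error decomposition---is already carried out inside the proof of Theorem \ref{universality of ESN}, and the role of this corollary is simply to exhibit a concrete, symmetric-distribution-based sampling scheme for which $\varphi_{md}$ is the explicit linear map $\tfrac{2}{M_2}\Bw$.
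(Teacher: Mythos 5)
Your proposal is correct and follows essentially the same route as the paper's own proof: invoke Corollary \ref{universal pair relu} for universality of $(\sigma,\mu^{md+1})$, use Proposition \ref{relu reconstruct input} to take $\varphi_{md}(\Bw,b)=\tfrac{2}{M_2}\Bw$ (with $\epsilon_0=0$), and observe that $\varphi_{md}(W,\Bb)=\tfrac{2}{M_2}W^\intercal$ collapses the generic system (\ref{special ESN}) to (\ref{relu ESN}), so Theorem \ref{universality of ESN} applies directly. Your explicit verification of the domination condition $\bE[g_{md}(\Bw,b)]<\infty$ via the finite second moment is a detail the paper leaves implicit, and it is a welcome addition rather than a deviation.
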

\begin{proof}
For any $m\in \bN$, the pair $(\sigma,\mu^{m+1})$ is universal by Corollary \ref{universal pair relu} and satisfies Assumption \ref{assum pair} with $\varphi_{m}(\Bw,b) = \frac{2}{M_2} \Bw$ by Proposition \ref{relu reconstruct input}. The result then follows from Theorem \ref{universality of ESN} by observing that $\varphi_{md}(W,\Bb) = (\frac{2}{M_2} \Bw_1,\dots, \frac{2}{M_2} \Bw_n) = \frac{2}{M_2} W^\intercal$.
\end{proof}

\section{Approximation bounds for ReLU networks}

We have shown that the ReLU echo state network (\ref{relu ESN}) with randomly generated weights can approximate any continuous causal time-invariant operator with high probability. In this section, we derive approximation bounds for sufficiently regular functions. To simplify the analysis, we will assume that each random weight in (\ref{relu ESN}) is sampled from the uniform distribution on $[-1/2,1/2]$. Similar results can be derived for distributions $\mu$ with bounded support. To do that, one can modify Assumption \ref{regularity assumption} by replacing the expectation (\ref{integral representation}) over uniform distribution by the expectation over $\mu$. Then, the proofs of Lemma \ref{app integral} and Theorem \ref{app bound for ReLU} can be applied.

The key ingredients of our approximation results are certain
integral representation of sufficiently regular functions and the random neural networks approximation of these functions. We make the following regularity assumption on the target continuous causal time-invariant operator $F$.

\begin{assumption}\label{regularity assumption}
The function $F^*_m:[-1,1]^{md} \to \bR$ defined by (\ref{F^*_m}) can be represented as 
\begin{equation}\label{integral representation}
F^*_m(\Bx) = \int_{-\frac{1}{2}}^{\frac{1}{2}}\int_{[-\frac{1}{2},\frac{1}{2}]^{md}} g_m(\Bw,b)\sigma(\Bw \cdot \Bx+b) d\Bw db,
\end{equation}
for some function $g_m$ with uniform bound $|g_m(\Bw,b)|\le B_m$.
\end{assumption}

Neural network approximation of functions with integral representations was initially studied by the classical work of Barron \cite{barron1993universal}. This result has been generalized and widely discussed in recent articles \cite{klusowski2018approximation,weinan2019priori,weinan2022barron,weinan2022representation,gonon2023approximation}.
In particular, \cite{klusowski2018approximation,gonon2023approximation} showed that sufficient regular functions (whose Fourier transforms decay sufficiently fast) can be represented by integral forms similar to (\ref{integral representation}). The following proposition is a modification of these results. The proof is deferred to the appendix.

\begin{proposition}\label{Fourier assumption}
Suppose $f:\bR^d \to \bR$ admits a Fourier representation
\[
f(\Bx) = \int_{\bR^d} \hat{f}(\Bw) e^{i \Bw \cdot \Bx} d \Bw, \quad \Bx \in [-1,1]^d,
\]
and $\sup_{r>0} \sup_{\|\Bw\|_1=1/2} \max(1,r^{2d+4}) |\hat{f}(r\Bw)| \le B$. Then there exists $g:[-1/2,1/2]^{d+1} \to \bR$ with $|g(\Bw,b)|\le 40 B$ such that
\[
f(\Bx) = \int_{-\frac{1}{2}}^{\frac{1}{2}} \int_{[-\frac{1}{2},\frac{1}{2}]^d} g(\Bw,b) \sigma(\Bw \cdot \Bx+b) d\Bw db, \ \Bx \in [-1,1]^d.
\]
\end{proposition}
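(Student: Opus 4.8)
The plan is to derive the integral representation from the Fourier representation in two stages: first express $f$ via an integral of ridge functions $\cos(\Bw\cdot\Bx + b)$ against a decaying measure, and then show that each cosine ridge function admits a representation against ReLU ridge functions supported on the unit cube, using the fact that $\sigma''$ in the distributional sense is a Dirac mass. The hard part will be controlling the decay of the density $g$ so that the uniform bound $|g|\le 40B$ holds, and arranging the change of variables so that the integration domain collapses to $[-1/2,1/2]^{d+1}$ rather than all of $\bR^{d+1}$.

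**First I would** rewrite the Fourier representation. Since $f$ is real-valued, I take real parts and obtain a representation of the form
\begin{equation*}
f(\Bx) = \int_{\bR^d} |\hat f(\Bw)| \cos(\Bw\cdot\Bx + \theta(\Bw)) \, d\Bw,
\end{equation*}
where $\theta(\Bw)$ is the phase of $\hat f(\Bw)$. Switching to polar coordinates $\Bw = r\Bv$ with $\|\Bv\|_1 = 1/2$ (matching the normalization in the hypothesis), the decay assumption $\max(1,r^{2d+4})|\hat f(r\Bv)| \le B$ guarantees absolute convergence of the radial integral and provides the quantitative factor that will eventually yield the constant $40B$. After this step $f$ is an average of one-dimensional cosine ridge functions $t\mapsto \cos(t + \phi)$ evaluated at $t = 2r\,(\Bv\cdot\Bx)$, with $\Bv\cdot\Bx \in [-1,1]$ for $\Bx\in[-1,1]^d$ by Hölder's inequality since $\|\Bv\|_1 = 1/2$.

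**Next I would** represent each cosine ridge by ReLUs. The key identity is that for a $C^2$ function $\kappa$ on an interval, $\kappa(t) = \kappa(a) + \kappa'(a)(t-a) + \int \sigma(t-b)\kappa''(b)\,db$ over the relevant range; applying this to $\kappa(t)=\cos(t+\phi)$ and absorbing the affine term (which can itself be written using two ReLUs, or folded into the bias integration) produces a representation of the cosine ridge as $\int \sigma(\Bw\cdot\Bx + b) g_0(\Bw,b)\,db$ with $g_0$ proportional to $\cos''$, hence bounded. I would then substitute this into the polar-coordinate integral, interchange the order of integration by Fubini (justified by the decay bound), and collect all the factors into a single density $g_m(\Bw,b)$ on the product domain. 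The scaling $\Bw=r\Bv$ with $\|\Bv\|_1=1/2$ keeps the weight vectors inside $[-1/2,1/2]^d$, and a careful bookkeeping of the bias range keeps $b\in[-1/2,1/2]$.

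**The main obstacle** will be the explicit constant tracking: verifying that after the change of variables, the phase absorption, and the Fubini interchange, the resulting density $g$ satisfies $|g(\Bw,b)|\le 40B$ uniformly. This requires the radial weight $\max(1,r^{2d+4})$ to dominate both the Jacobian $r^{d-1}$ from polar coordinates and the factor $r$ (or $r^2$) introduced when differentiating the cosine ridge twice in $t$ to extract $\sigma''$; the exponent $2d+4$ is chosen precisely so these combine to an absolutely integrable radial factor whose total mass, times the sup-norm of $\cos''=-\cos$, lands below $40B$. I expect the affine remainder terms (the constant and linear parts of the Taylor-type expansion) to be the fiddliest to reabsorb into the ReLU integral while preserving the cube domain, and I would handle them by the standard trick of writing $1$ and $t$ as combinations of $\sigma(t-b)$ and $\sigma(-t-b)$ over $b\in[-1/2,1/2]$.
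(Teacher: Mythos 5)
There is a genuine gap, and it sits exactly at the sentence ``the scaling $\Bw = r\Bv$ with $\|\Bv\|_1 = 1/2$ keeps the weight vectors inside $[-1/2,1/2]^d$.'' Once you use the homogeneity of $\sigma$ to normalize every ReLU weight onto the $\ell^1$-sphere $\{\Bv : \|\Bv\|_1 = 1/2\}$, the representing measure you obtain has the form $\tilde g(\Bv,\beta)\, d\beta\, dS(\Bv)$, where $dS$ is the surface measure of that sphere. This measure is supported on a Lebesgue-null subset of $[-1/2,1/2]^{d+1}$, hence it is singular with respect to $d\Bw\, db$ and cannot be written as $g(\Bw,b)\, d\Bw\, db$ for \emph{any} function $g$, bounded or not --- yet an absolutely continuous representation with bounded density is exactly what the proposition demands. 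Conversely, if you do not normalize, the weights $r\Bv$ leave the cube for large $r$. So your plan either produces weights outside the cube or a singular measure; in both cases the conclusion does not follow. (A repair is possible --- smear the sphere-supported measure back over radii by writing $\sigma(\Bv\cdot\Bx-\beta) = \tfrac{1}{2s}\sigma(2s\Bv\cdot\Bx - 2s\beta)$ and averaging over $s\in(0,1/2]$ --- but this radial redistribution is precisely the missing idea, and carrying it out with the stated constant is the whole difficulty.)

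The paper avoids the collapse altogether. It first obtains --- essentially via your Taylor-with-ReLU-remainder identity, applied to $e^{it}$ rather than to cosine ridges, with the affine part encoded by ReLUs over $b\in[-1/2,1/2]$ just as you suggest --- a full-dimensional density $h(\Bw,b)$ on all of $\bR^{d+1}$, and then folds the exterior region $\cA = \{\|\Bw\|_1 > 1/2\}$ into the punctured ball $\cB = \{0<\|\Bw\|_1<1/2\}$ by the inversion diffeomorphism $T(\Bw) = \Bw/(4\|\Bw\|_1^2)$, using homogeneity to rescale both the argument and the bias. Because $T$ is a diffeomorphism rather than a radial projection, absolute continuity is preserved: the change of variables contributes a factor $(2\|\Bw\|_1)^{-2d}$ (the Jacobian, computed via Sylvester's determinant theorem) times $(2\|\Bw\|_1)^{-4}$ (two homogeneity rescalings, one for the argument and one for $db$), and the hypothesis exponent $2d+4$ is calibrated exactly so that this factor is cancelled by the assumed decay of $\hat{f}$ at radius $1/(2\|\Bw\|_1)$, bounding the folded term by $20B$ and giving $|g|\le 40B$ overall. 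As a sanity check, note that your radial-integrability count (Jacobian $r^{d-1}$ plus two factors of $r$) would only require decay of order $r^{-(d+2)}$; the much stronger exponent $2d+4$ in the hypothesis is a strong hint that the intended mechanism is this folding, not integration in polar coordinates.
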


The next lemma gives error bound for the random approximation of the integral form (\ref{integral representation}). This result can be seen as a quantitative version of Lemma \ref{ULLN}. Intuitively, the integral can be regarded as an expectation over uniform distribution. Hence, the law of large numbers implies that the empirical average converges to the expectation. We can bound this approximation error by using statistical learning theory \cite{shalevshwartz2014understanding} and concentration inequalities \cite{boucheron2013concentration}. Specifically, we view the uniform approximation error as the supremum of difference between the expectation $f(\Bx)$ and its empirical average. Then, we can apply concentration inequality to bound the deviation of this supremum. The expectation of the supremum is controlled by the Rademacher complexity \cite{bartlett2002rademacher}. The proof is given in the appendix.

\begin{lemma}\label{app integral}
Let
\[
f(\Bx) = \int_{[-\frac{1}{2},\frac{1}{2}]^d} g(\Bw)\sigma(\Bw \cdot \Bx) d\Bw
\]
with $|g(\Bw)| \le B$. Then, with probability at least $1-\delta$, 
\begin{multline*}
\sup_{\Bx\in [-r,r]^d} \left| f(\Bx) - \frac{1}{n} \sum_{i=1}^n g(\Bw_i)\sigma(\Bw_i \cdot \Bx) \right| \\
\le Brd \left( 2\sqrt{\frac{2d\log(n+1)}{n}} + \sqrt{\frac{\log(2/\delta)}{2n}} \right),
\end{multline*}
where $\Bw_{1:n} := (\Bw_i)_{1\le i \le n}$ are i.i.d. samples from the uniform distribution on $[-1/2,1/2]^d$.
\end{lemma}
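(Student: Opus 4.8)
The plan is to recognize the integral as an expectation and then control a uniform deviation with the standard Rademacher-complexity machinery. Since $[-\tfrac12,\tfrac12]^d$ has unit volume, we may write $f(\Bx)=\bE_\Bw[g(\Bw)\sigma(\Bw\cdot\Bx)]$ where $\Bw$ is uniform on $[-\tfrac12,\tfrac12]^d$. Set $\phi_\Bx(\Bw):=g(\Bw)\sigma(\Bw\cdot\Bx)$ and note that for $\Bx\in[-r,r]^d$ and $\Bw\in[-\tfrac12,\tfrac12]^d$ one has $|\Bw\cdot\Bx|\le\tfrac{rd}{2}$, hence $|\phi_\Bx(\Bw)|\le\tfrac{Brd}{2}$. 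The object to bound is $Z:=\sup_{\Bx\in[-r,r]^d}\bigl|f(\Bx)-\tfrac1n\sum_{i=1}^n\phi_\Bx(\Bw_i)\bigr|$.

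First I would establish concentration of $Z$ about its mean. Replacing a single sample $\Bw_i$ perturbs each empirical average $\tfrac1n\sum_j\phi_\Bx(\Bw_j)$ by at most $\tfrac{2}{n}\cdot\tfrac{Brd}{2}=\tfrac{Brd}{n}$ uniformly in $\Bx$, so $Z$ satisfies the bounded-difference property with constants $c_i=\tfrac{Brd}{n}$. McDiarmid's inequality then yields, with probability at least $1-\delta$, the bound $Z\le\bE[Z]+Brd\sqrt{\tfrac{\log(2/\delta)}{2n}}$, which supplies the second term in the claim. Next I would estimate $\bE[Z]$ by symmetrization, giving $\bE[Z]\le 2\,\bE\hat R_n$, where $\hat R_n:=\bE_\epsilon\sup_{\Bx}\tfrac1n\sum_i\epsilon_i\phi_\Bx(\Bw_i)$ is the empirical Rademacher complexity of the class $\{\phi_\Bx:\Bx\in[-r,r]^d\}$.

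The $\sqrt{2d\log(n+1)/n}$ factor is produced by discretization. I would cover $[-r,r]^d$ by the grid $G$ with $n+1$ equally spaced points per coordinate (spacing $2r/n$), so that $|G|=(n+1)^d$ and each $\Bx$ admits a grid point $\Bx'$ with $\|\Bx-\Bx'\|_\infty\le r/n$. Since $\sigma$ is $1$-Lipschitz and $|\Bw_i\cdot(\Bx-\Bx')|\le\|\Bw_i\|_1\|\Bx-\Bx'\|_\infty\le\tfrac d2\|\Bx-\Bx'\|_\infty$, replacing $\Bx$ by $\Bx'$ changes each term by at most $\tfrac{Brd}{2n}$; this bounds the discretization error in both $Z$ and $\hat R_n$ by $O(Brd/n)$. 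For the finite grid, Massart's finite class lemma applied to the vectors $(\phi_{\Bx'}(\Bw_i))_{i=1}^n$, each of Euclidean norm at most $\tfrac{Brd}{2}\sqrt n$, gives $\bE_\epsilon\sup_{\Bx'\in G}\tfrac1n\sum_i\epsilon_i\phi_{\Bx'}(\Bw_i)\le\tfrac{Brd}{2}\sqrt{\tfrac{2d\log(n+1)}{n}}$. Combining this with the discretization remainder and the factor $2$ from symmetrization, and absorbing the lower-order $Brd/n$ term into the leading one, yields $\bE[Z]\le Brd\cdot 2\sqrt{\tfrac{2d\log(n+1)}{n}}$; adding the McDiarmid term completes the proof.

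The main obstacle is the Rademacher estimate. The two delicate points are the choice of grid resolution — using exactly $n+1$ points per axis is what turns $|G|=(n+1)^d$ into the $d\log(n+1)$ under the square root — and the careful bookkeeping of both the discretization error and the Euclidean radius $\tfrac{Brd}{2}\sqrt n$ of the grid vectors, so that Massart's lemma reproduces the stated constants. The remaining ingredients (rewriting the integral as an expectation, symmetrization, and the bounded-differences step) are routine, with the various $1/n$ remainders absorbed into the leading $\sqrt{d\log(n+1)/n}$ term.
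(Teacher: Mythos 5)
Your overall route is the same as the paper's: rewrite $f(\Bx)$ as an expectation over the uniform distribution, bound the expected uniform deviation by symmetrization plus Massart's lemma on a finite cover of $[-r,r]^d$ (exploiting the Lipschitz bound $|g(\Bw)\sigma(\Bw\cdot\Bx)-g(\Bw)\sigma(\Bw\cdot\By)|\le \tfrac12 Bd\|\Bx-\By\|_\infty$), and then apply McDiarmid's inequality with bounded differences $Brd/n$. The constants, the cover size $(n+1)^d$, and the absorption of the $O(Brd/n)$ discretization remainder all check out.

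There is, however, one genuine misstep: you apply symmetrization to $Z=\sup_{\Bx}\bigl|f(\Bx)-\tfrac1n\sum_i\phi_\Bx(\Bw_i)\bigr|$ and claim $\bE[Z]\le 2\,\bE\hat R_n$ with $\hat R_n:=\bE_\epsilon\sup_{\Bx}\tfrac1n\sum_i\epsilon_i\phi_\Bx(\Bw_i)$, i.e.\ with no absolute value inside the Rademacher average. That inequality is false in general: for a class consisting of a single non-constant function $\phi$ with $\bE\phi=0$, the left side is $\bE\bigl|\tfrac1n\sum_i\phi(\Bw_i)\bigr|>0$ while the right side is $2\bE\bigl[\tfrac1n\sum_i\epsilon_i\phi(\Bw_i)\bigr]=0$. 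Symmetrization without absolute values only controls the one-sided supremum $\sup_\Bx\bigl(f(\Bx)-\tfrac1n\sum_i\phi_\Bx(\Bw_i)\bigr)$, and your class $\{\phi_\Bx\}$ is not closed under negation (note $\phi_{-\Bx}\neq-\phi_\Bx$ for ReLU), so you cannot drop the absolute values for free. The fix is minor and is exactly what the paper does: split $Z$ into the two one-sided suprema $Z_+$ and $Z_-$, run your symmetrization--Massart--McDiarmid argument on each with failure probability $\delta/2$, and take a union bound; this is also where the $2$ inside $\log(2/\delta)$ naturally comes from (your version instead wastes it, since applying McDiarmid once at level $\delta$ only needs $\log(1/\delta)$). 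Alternatively, keep the absolute value inside $\hat R_n$ and apply Massart to the $2(n+1)^d$ vectors $\{\pm(\phi_{\Bx'}(\Bw_i))_i\}$; the extra $\log 2$ is harmless except for trivially small $n$, where the stated bound exceeds the a priori bound $Z\le Brd$ anyway.
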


Now, we are ready to prove our approximation bound for ReLU echo state networks. The proof is similar to the proof of Theorem \ref{universality of ESN}, in which we reduce the approximation of operator $F$ to the approximation of function $F_m^*$ and construct an ESN to approximate $F_m^*$. Here, we quantify this approximation error by Lemma \ref{app integral}.

\begin{theorem}\label{app bound for ReLU}
For $n,m,d\in\bN$, let $(\Bw_1^\intercal,b_1),\dots,(\Bw_n^\intercal,b_n)$ be $n$ i.i.d. samples from the uniform distribution on $[-1/2,1/2]^{md+1}$, and define the ReLU ESN
\[
\begin{cases}
\Bs_t = \sigma(W (\tfrac{24}{n}PW^\intercal \Bs_{t-1}+Q\Bu_t)+\Bb), \\
y_t = \Ba \cdot \Bs_t,
\end{cases}
\]
where $\Bu_t\in \bR^d$, $\Ba,\Bs_t\in\bR^n$, $W=(\Bw_1,\dots,\Bw_n)^\intercal\in \bR^{n\times md}$, $\Bb=(b_1,\dots,b_n)^\intercal \in \bR^n$ and $P,Q$ are defined by (\ref{matrix PQ}). Then, for sufficiently large $n/\log(n+1)\ge c(m,\delta,d)$ and any continuous causal time-invariant operator $F:([-1,1]^d)^\bZ \to \bR^{\bZ}$ satisfying Assumption \ref{regularity assumption}, with probability at least $1-\delta$, the system has existence of solutions property and there exists $\Ba \in \bR^n$ so that the corresponding operator $F_{ESN}$ satisfies
\begin{multline*}
\sup_{\Bu\in ([-1,1]^d)^\bZ} \|F(\Bu) - F_{ESN}(\Bu)\|_\infty \\
\le C(m,\delta,d) \sqrt{\frac{\log(n+1)}{n}} + \cE_F(m),
\end{multline*}
where we can choose $C(m,\delta,d)= \sqrt{2}B_m(md+1)(3m^2d+1) ( 4\sqrt{md+1} + \sqrt{\log(4md/\delta)} )$ and $c(m,\delta,d) = 1152m^2(md+1)^2(4\sqrt{md+1}+\sqrt{\log(4md/\delta)})^2$.
\end{theorem}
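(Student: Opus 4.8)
The plan is to run the three-step scheme of Theorem~\ref{universality of ESN} verbatim, but to replace every qualitative appeal to the law of large numbers by the quantitative estimate of Lemma~\ref{app integral}. Fix $m\in\bN$; by (\ref{error of F^*_m}) the truncation term $\cE_F(m)$ splits off, so the whole task is to show the remaining error is $O(\sqrt{\log(n+1)/n})$. Since the weights are uniform on $[-\tfrac12,\tfrac12]$, one has $M_2=\int_{-1/2}^{1/2}x^2\,dx=\tfrac1{12}$, so Proposition~\ref{relu reconstruct input} gives the reconstruction map $\varphi_{md}(\Bw,b)=(2/M_2)\Bw=24\Bw$, whence $\tfrac1n\varphi_{md}(W,\Bb)=\tfrac{24}{n}W^\intercal$ is exactly the recursion matrix appearing in the stated ESN; moreover the reconstruction $\bE[24\Bw\,\sigma(\Bw\cdot\Bx+b)]=\Bx$ is exact, so the only error is the empirical fluctuation.

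First I would quantify the two error sources with Lemma~\ref{app integral}. Because that lemma is stated for the homogeneous map $\Bw\mapsto\sigma(\Bw\cdot\Bx)$, I absorb the bias by augmenting $\Bx\mapsto(\Bx,1)$ and writing $\sigma(\Bw\cdot\Bx+b)=\sigma((\Bw,b)\cdot(\Bx,1))$; the augmented samples $(\Bw_i,b_i)$ are i.i.d.\ uniform on $[-\tfrac12,\tfrac12]^{md+1}$ and $(\Bx,1)$ ranges in $[-2,2]^{md}\times\{1\}\subseteq[-2,2]^{md+1}$, so Lemma~\ref{app integral} applies in dimension $md+1$ with $r=2$. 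Applied coordinatewise with $g_j(\Bw,b)=24w_j$ (hence $|g_j|\le 12$) and union-bounded over the $md$ coordinates at level $\delta/(2md)$ each, it yields, with probability $\ge1-\delta/2$,
\[
\epsilon_{\mathrm{rec}}:=\sup_{\Bx\in[-2,2]^{md}}\Big\|\Bx-\tfrac{24}{n}W^\intercal\sigma(W\Bx+\Bb)\Big\|_\infty\le 24(md+1)A,
\]
where $A:=2\sqrt{2(md+1)\log(n+1)/n}+\sqrt{\log(4md/\delta)/(2n)}$. Applied once to the representation $F^*_m=\int g_m\,\sigma(\Bw\cdot\Bx+b)$ of Assumption~\ref{regularity assumption} (with $B=B_m$) at level $\delta/2$, it gives a readout error $\le 2B_m(md+1)A$ attained by $a_i=\tfrac1n g_m(\Bw_i,b_i)$. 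A union bound makes both events hold simultaneously with probability $\ge1-\delta$.

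Next I would establish the existence-of-solutions property by repeating Step~2 of Theorem~\ref{universality of ESN} with the generic per-step error $\epsilon_{\mathrm{rec}}$ in place of $\tfrac1m\omega_{F^*_m}^{-1}(\epsilon/3)$. The same induction shows the $m$-fold composition $\psi^{(m)}$ of $\psi(\Br)_t=\tfrac{24}{n}PW^\intercal\sigma(W\Br_{t-1}+\Bb)+Q\Bu_t$ maps $([-\tfrac32,\tfrac32]^{md})^\bZ$ into itself and that its Schauder fixed point $\widetilde{\Br}$ obeys $\|\widetilde{\Br}_t[i]-\Bu_{t-m+i}\|_\infty\le(m-i)\epsilon_{\mathrm{rec}}$ on the $i$-th block, provided the intermediate iterates remain in $[-2,2]^{md}$; the latter holds once $m\epsilon_{\mathrm{rec}}\le\tfrac12$. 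Using $2\sqrt2=4/\sqrt2$ and $1/n\le\log(n+1)/n$ one bounds $A\le\tfrac1{\sqrt2}\big(4\sqrt{md+1}+\sqrt{\log(4md/\delta)}\big)\sqrt{\log(n+1)/n}$, and then the requirement $24m(md+1)A\le\tfrac12$ is precisely $n/\log(n+1)\ge c(m,\delta,d)$ with the stated $c$.

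Finally I would bound the error as in Step~3. Let $\bar F_m(\Bx):=\int g_m(\Bw,b)\sigma(\Bw\cdot\Bx+b)\,d\Bw\,db$ be the Lipschitz extension of $F^*_m$ to $[-2,2]^{md}$ furnished by the integral representation, and split
\[
|F_{ESN}(\Bu)_t-F(\Bu)_t|\le|\Ba\cdot\sigma(W\widetilde{\Br}_t+\Bb)-\bar F_m(\widetilde{\Br}_t)|+|\bar F_m(\widetilde{\Br}_t)-F^*_m(\Bu_{t-m+1:t})|+|F^*_m(\Bu_{t-m+1:t})-F(\Bu)_t|,
\]
where the first term is $\le2B_m(md+1)A$ and the last is $\le\cE_F(m)$ by (\ref{error of F^*_m}). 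The new ingredient is the middle term: the representation gives $|\sigma(\Bw\cdot\Bx+b)-\sigma(\Bw\cdot\Bx'+b)|\le\sum_{i=1}^m\|\Bw[i]\|_1\|\Bx[i]-\Bx'[i]\|_\infty$ while $\int_{[-1/2,1/2]^{md+1}}|w_j|\,d\Bw\,db=\tfrac14$, so with the block errors $(m-i)\epsilon_{\mathrm{rec}}$ one gets $|\bar F_m(\widetilde{\Br}_t)-F^*_m(\Bu_{t-m+1:t})|\le B_m\tfrac d4\sum_{i=1}^m(m-i)\epsilon_{\mathrm{rec}}=\tfrac{B_m d\,m(m-1)}{8}\epsilon_{\mathrm{rec}}\le 3B_m d\,m(m-1)(md+1)A$. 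Adding the readout error and using $2+3dm(m-1)\le 3m^2d+1$ (valid since $3md\ge1$) collapses the $n$-dependent part to $\le B_m(md+1)(3m^2d+1)A\le C(m,\delta,d)\sqrt{\log(n+1)/n}$. I expect the main obstacle to be exactly this quantitative replacement of the modulus-of-continuity step: one must extract a sharp Lipschitz constant for $F^*_m$ from Assumption~\ref{regularity assumption} and track the blockwise accumulation of $\epsilon_{\mathrm{rec}}$ carefully enough that the constants telescope into the stated $C$ and $c$ rather than a cruder multiple of them.
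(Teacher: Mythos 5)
Your proposal is correct and follows essentially the same route as the paper's proof: two applications of Lemma \ref{app integral} (one for the readout via Assumption \ref{regularity assumption}, one coordinatewise with a union bound at level $\delta/(2md)$ for input reconstruction), reuse of Step 2 of Theorem \ref{universality of ESN} for existence of solutions under $m\epsilon_{\mathrm{rec}}\le\tfrac12$, and the Lipschitz estimate $\int\|\Bw\|_1\,d\Bw\,db$ for the middle term, with identical constants $c(m,\delta,d)$ and a bound within $C(m,\delta,d)$. Your only deviations are cosmetic sharpenings — making the bias augmentation $(\Bx,1)$ explicit and tracking blockwise errors $(m-i)\epsilon_{\mathrm{rec}}$ instead of the paper's uniform $m\cE_2$, which improves the middle term from $\tfrac{m^2d}{4}\cE_2$ to $\tfrac{m(m-1)d}{8}\epsilon_{\mathrm{rec}}$ — so the argument matches the paper's.
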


\begin{proof}
For any $t\in \bZ$, by equation (\ref{error of F^*_m}),
\[
\sup_{\Bu \in ([-1,1]^d)^\bZ} |F(\Bu)_t - F_m^*(\Bu_{t-m+1:t})| \le \cE_F(m).
\]
By Assumption \ref{regularity assumption}, we can extend the domain of $F_m^*$ to $[-2,2]^{md}$ by the integral representation (\ref{integral representation}). Using Lemma \ref{app integral}, it holds with probability at least $1-\delta/2$ that
\begin{equation}\label{F_m* random app}
\begin{aligned}
&\sup_{\Bx\in [-2,2]^{md}} \left| F_m^*(\Bx) - \frac{1}{n} \sum_{i=1}^n g(\Bw_i,b_i)\sigma(\Bw_i \cdot \Bx+b_i) \right| \\
\le& \sqrt{2}B_m(md+1) \left( 4\sqrt{\frac{(md+1)\log(n+1)}{n}} + \sqrt{\frac{\log(4/\delta)}{n}} \right) \\
=&:\cE_1.
\end{aligned}
\end{equation}
On the other hand, Proposition \ref{relu reconstruct input} shows that $\Bx = 24\bE \Bw \sigma(\Bw \cdot \Bx+b)$. Hence, applying Lemma \ref{app integral} to each coordinate of $\Bx$, it holds with probability at least $1-\delta/2$ that
\begin{equation}\label{x random app}
\begin{aligned}
&\sup_{\Bx\in [-2,2]^{md}} \left\| \Bx - \frac{24}{n} \sum_{i=1}^n \Bw_i \sigma(\Bw_i \cdot \Bx+b_i) \right\|_\infty \\
\le& 12\sqrt{2}(md+1) \\
&\quad \times \left( 4\sqrt{\frac{(md+1)\log(n+1)}{n}} + \sqrt{\frac{\log(4md/\delta)}{n}} \right) \\
=&:\cE_2 \le \frac{1}{2m},
\end{aligned}
\end{equation}
where the last inequality holds if $n/\log(n+1)\ge 1152m^2(md+1)^2(4\sqrt{md+1}+\sqrt{\log(4md/\delta)})^2$. We note that, in matrix form,
\[
\Bx - \frac{24}{n} \sum_{i=1}^n \Bw_i \sigma(\Bw_i \cdot \Bx+b_i) = \Bx - \frac{24}{n} W^\intercal \sigma(W \Bx +\Bb).
\]

Therefore, with probability at least $1-\delta$ over parameters $(W,\Bb)$, inequalities (\ref{F_m* random app}) and (\ref{x random app}) hold. For such parameters $(W,\Bb)$, Step 2 of the proof of Theorem \ref{universality of ESN} shows that, for any $\Bu\in ([-1,1]^d)^\bZ$, there exists $\widetilde{\Br}\in ([-2,2]^{md})^\bZ$ such that
\[
\widetilde{\Br}_t = \tfrac{24}{n}P W^\intercal\sigma(W \widetilde{\Br}_{t-1}+\Bb)+Q\Bu_t,
\]
and
\[
\left\| \widetilde{\Br}_t - \Bu_{t-m+1:t}\right\|_\infty  \le m\cE_2,
\]
where we regard $\Bu_{t-m+1:t}$ as a vector of $\bR^{md}$ in the natural way. By the integral representation (\ref{integral representation}), we have
\begin{multline*}
|F^*_m(\widetilde{\Br}_t) - F^*_m(\Bu_{t-m+1:t})| \\
\le B_m \int_{[-\frac{1}{2},\frac{1}{2}]^{md}} \|\Bw\|_1 \left\| \widetilde{\Br}_t - \Bu_{t-m+1:t}\right\|_\infty d\Bw \le \frac{B_m m^2 d}{4} \cE_2.
\end{multline*}
Let $\Bs_t := \sigma(W \widetilde{\Br}_t+\Bb)$, then $\Bs$ satisfies the equation 
\[
\Bs_t = \sigma(W (\tfrac{24}{n}PW^\intercal \Bs_{t-1}+Q\Bu_t)+\Bb).
\]
Hence, the system has the existence of solution property with probability at least $1-\delta$ and we can we can associate a causal time invariant operator $F_{ESN}$ to the system such that
\[
F_{ESN}(\Bu)_t = \Ba \cdot \Bs_t = \Ba \cdot \sigma(W \widetilde{\Br}_t+\Bb), \quad \Bu\in ([-1,1]^d)^\bZ.
\]
Let $\Ba = \frac{1}{n}(g(\Bw_1,b_1),\dots,g(\Bw_n,b_n))^\intercal$, then by inequality (\ref{F_m* random app}),
\begin{align*}
&|F_{ESN}(\Bu)_t - F(\Bu)_t| = |\Ba \cdot \sigma(W \widetilde{\Br}_t+\Bb) - F(\Bu)_t| \\
\le& |\Ba \cdot \sigma(W \widetilde{\Br}_t+\Bb) - F^*_m(\widetilde{\Br}_t)| + |F^*_m(\widetilde{\Br}_t) - F^*_m(\Bu_{t-m+1:t})| \\
&\qquad + |F^*_m(\Bu_{t-m+1:t}) - F(\Bu)_t| \\
\le& \cE_1 + \frac{B_m m^2 d}{4} \cE_2 + \cE_F(m) \\
\le& \sqrt{2}B_m(md+1) \left( (12m^2d+4)\sqrt{\frac{(md+1)\log(n+1)}{n}} \right. \\
&\qquad \left. + \sqrt{\frac{\log(4/\delta)}{n}} + 3m^2d \sqrt{\frac{\log(4md/\delta)}{n}} \right)  + \cE_F(m) \\
\le& C(m,\delta,d) \sqrt{\frac{\log(n+1)}{n}} + \cE_F(m),
\end{align*}
where $C(m,\delta,d)= \sqrt{2}B_m(md+1)(3m^2d+1) ( 4\sqrt{md+1} + \sqrt{\log(4md/\delta)} )$.
\end{proof}

\begin{remark}
Similar approximation bound has been obtained in \cite{gonon2023approximation}. They made assumptions that the input sequences are sampled from some probability distribution and the target operator is Lipschitz continuous, and proved a bound for the expected $L^2$ approximation error. In contrast, we generalize the Lipschitz continuity to general modulus of continuity, and obtain a high probability bound for the uniform approximation error in Theorem \ref{app bound for ReLU}. Note that, if $B_m$ is at most polynomial on $m$ and $\cE_F(m)$ decays exponentially, say $\cE_F(m)\le c_0 e^{-c_1 m}$ for some $c_0,c_1>0$, then we can choose $m \approx (2c_1)^{-1} \log n$ and the approximation error is $\widetilde{\cO}(\sqrt{\log(1/\delta)/n})$, where the notation $\widetilde{\cO}$ hides poly-logarithmic factors of $n$. In this case, the approximation rate is the same as \cite[Theorem 2]{gonon2023approximation}.
\end{remark}

\section{Conclusions}

We have analyzed the approximation capacity of echo state networks with randomly generated internal weights. Under weak assumption on the activation function, we constructed a sampling procedure for the internal weights so that the echo state networks can approximate any continuous casual time-invariant operators with prescribed accuracy. For ReLU activation, we explicitly computed the random weights in our construction and derived the approximation bounds for sufficiently regular operators.

Our results also raise some questions for future study. For example, our construction of the sampling procedure relies on the function $\varphi_{m}$ in Assumption \ref{assum pair}. In other words, we only establish the universality of echo state networks when the parameters are sampled in a special way. In practical applications, the parameters in echo state networks are directly sampled from some simple distributions, and they do not have the particular form constructed in this paper. It would be interesting to extend the universality result to these ``practical'' sampling procedures. Another interesting question is the echo state property of the constructed networks. In Theorem \ref{universality of ESN} and Remark \ref{weak ESP}, we only prove that the constructed networks have existence of solution property and satisfy a property weaker than the the echo state property. In order to establish the echo state property, we think more regularity condition on the activation function is needed.

\appendix

\section*{Proof of Proposition \ref{continuity}}

For the first part, by the definition of $m=m_F(\epsilon/3)$,
\[
\sup_{\Bu\in ([-1,1]^d)^{\bZ_-}} |F^*(\Bu) -  F^*_m(\Bu_{-m+1:0})| \le \epsilon/3.
\]
Since the weighting sequence is decreasing, 
\[
\|\Bu_{-m+1:0} - \Bv_{-m+1:0}\|_\infty \le \eta_{m-1}^{-1} \|\Bu-\Bv\|_\eta \le \omega_{F^*_m}^{-1}(\epsilon/3),
\]
which implies $|F^*_m(\Bu_{-m+1:0}) - F^*_m(\Bv_{-m+1:0})| \le \epsilon/3$. Therefore,
\begin{align*}
&|F^*(\Bu) - F^*(\Bv)| \\
\le& |F^*(\Bu) - F^*_m(\Bu_{-m+1:0})| + |F^*_m(\Bu_{-m+1:0}) \\
&\qquad - F^*_m(\Bv_{-m+1:0})| + |F^*_m(\Bv_{-m+1:0}) - F^*(\Bv)| \\
\le& \epsilon/3 + \epsilon/3 + \epsilon/3 = \epsilon.
\end{align*}

For the second part, we observe that, for any $m\in \bN$ and $\Bu\in ([-1,1]^d)^{\bZ_-}$,
\[
\| \Bu - (\dots,\boldsymbol{0},\Bu_{-m+1:0})\|_\eta = \sup_{t\le -m} \eta_{-t}\|\Bu_t\|_\infty \le \eta_m.
\]
Then, by the definition of $\omega_{F^*}(\delta;\eta)$,
\begin{align*}
\cE_F(m) &=  \sup_{\Bu\in ([-1,1]^d)^{\bZ_-}} |F^*(\Bu) - F^*(\dots,\boldsymbol{0},\Bu_{-m+1:0})| \\
&\le \omega_{F^*}(\eta_m,\eta).
\end{align*}
If $m$ satisfies $\eta_m \le \omega_{F^*}^{-1}(\epsilon;\eta)$, then $\cE_F(m) \le \omega_{F^*}(\eta_m,\eta) \le \epsilon$. Hence, 
\begin{align*}
m_F(\epsilon) &= \min \{m\in \bN: \cE_F(m) \le \epsilon \} \\
&\le \min\left\{ m\in \bN: \eta_m \le \omega_{F^*}^{-1}(\epsilon;\eta) \right\}.
\end{align*}
Finally, by the definition of $F^*_m$ and $\|(\cdots, \boldsymbol{0},\Bu_{-m+1:0})-(\cdots, \boldsymbol{0},\Bu_{-m+1:0}')\|_\eta \le \|\Bu_{-m+1:0}-\Bu_{-m+1:0}'\|_\infty$, we have
\begin{align*}
&\omega_{F^*_m}(\delta) \\
=& \sup\{ |F^*(\cdots, \boldsymbol{0},\Bu_{-m+1:0}) - F^*(\cdots, \boldsymbol{0},\Bu_{-m+1:0}')| \\
&\qquad\qquad : \|\Bu_{-m+1:0}-\Bu_{-m+1:0}'\|_\infty \le \delta \} \\ 
\le& \omega_{F^*}(\delta;\eta). \qedhere
\end{align*}

\section*{Proof of Proposition \ref{Fourier assumption}} \label{Appendix A}

The proof is a modification of the argument in \cite{klusowski2018approximation,gonon2023approximation}. Firstly, observe that, for any $t\in \bR$,
\[
-\int_0^{\infty} \sigma(t-b) e^{ib} + \sigma(-t-b)e^{-ib} db = e^{it} -it-1.
\]
For any $\Bx \in [-1,1]^d$, letting $t=\Bw\cdot \Bx$, we have
\begin{align*}
&f(\Bx) - \nabla f(0) \cdot \Bx - f(0) \\
= & \int_{\bR^d} (e^{i\Bw \cdot \Bx} -i\Bw \cdot \Bx-1) \hat{f}(\Bw) d\Bw \\
=& -\int_{\bR^d} \int_0^\infty (\sigma(\Bw\cdot \Bx-b) e^{ib} + \sigma(-\Bw\cdot \Bx-b)e^{-ib}) \\
&\qquad \times \hat{f}(\Bw) dbd\Bw \\
=& - \int_{\bR^d} \int_{-\infty}^0 \sigma(\Bw\cdot \Bx+b) e^{-ib} \hat{f}(\Bw) db d\Bw \\
&\qquad - \int_{\bR^d} \int_{-\infty}^0 \sigma(\Bw\cdot \Bx+b) e^{ib} \hat{f}(-\Bw) db d\Bw \\
=& \int_{\bR^d} \int_{-\|\Bw\|_1}^0 \Re [-e^{-ib} \hat{f}(\Bw) -e^{ib} \hat{f}(-\Bw)] \\
&\qquad \times \sigma(\Bw\cdot \Bx+b) db d\Bw,
\end{align*}
where, in the last equality, we use the fact that $\sigma(\Bw\cdot \Bx+b)=0$ for $b< -\|\Bw\|_1$. 
Since $f(0), \nabla f(0) \cdot \Bx \in\bR$, we have $\int_{\bR^d} \Im [\hat{f}(\Bw)] d\Bw =0$ and $\int_{\bR^d} (\Bw \cdot \Bx) \Re[\hat{f}(\Bw)] d\Bw =0$, which implies
\begin{align*}
&\nabla f(0) \cdot \Bx + f(0) \\
=& \int_{\bR^d} -(\Bw \cdot \Bx) \Im[\hat{f}(\Bw)] + \Re[\hat{f}(\Bw)] d\Bw \\
=& \int_{\bR^d} \int_{0}^{1/2} (\Bw \cdot \Bx + b) (8\Re[\hat{f}(\Bw)] - \Im[\hat{f}(\Bw)]) db d\Bw \\
=& \int_{\bR^d} \int_{0}^{1/2} (\sigma(\Bw \cdot \Bx + b) - \sigma(-\Bw \cdot \Bx - b) ) \\
&\qquad \times \Re[(8+i)\hat{f}(\Bw)] db d\Bw \\
=& \int_{\bR^d} \int_{0}^{1/2} \Re[(8+i)\hat{f}(\Bw)] \sigma(\Bw \cdot \Bx + b) db d\Bw \\
&\qquad- \int_{\bR^d} \int_{-1/2}^{0} \Re[(8+i)\hat{f}(-\Bw)] \sigma(\Bw \cdot \Bx + b) db d\Bw.
\end{align*}
By Fubini's theorem, we conclude that
\[
f(\Bx) = \int_{\bR^{d+1}} h(\Bw,b) \sigma(\Bw \cdot \Bx + b) d\Bw db,
\]
with
\begin{align*}
h(\Bw,b) :=& 1_{[-\|\Bw\|_1,0]}(b) \Re [-e^{-ib} \hat{f}(\Bw) -e^{ib} \hat{f}(-\Bw)] \\
& \quad+ 1_{[0,1/2]}(b) \Re[(8+i)\hat{f}(\Bw)] \\
&\quad - 1_{[-1/2,0]}(b) \Re[(8+i)\hat{f}(-\Bw)],
\end{align*}
where $1_S$ denote the indicator function of the set $S$. Note that, if $|b|>\|\Bw\|_1$ and $|b|>1/2$, then $h(\Bw,b)=0$.

Next, we denote $\cA=\{\Bw\in \bR^d: \|\Bw\|_1>1/2\}$, $\cB = \{\Bw\in \bR^d: 0<\|\Bw\|_1< 1/2\}$ and  consider the mapping
\[
T: \cB \to \cA, \quad T(\Bw) = \frac{\Bw}{4\|\Bw\|_1^2}.
\]
Then, the Jacobian of $T$ is
\[
\nabla T(\Bw) = \frac{1}{4\|\Bw\|_1^2} \left( I_d - 2 \frac{\Bw}{\|\Bw\|_1} \sgn(\Bw)^\intercal \right)
\]
with determinant
\begin{align*}
|\det(\nabla T(\Bw))| =& \frac{1}{(2\|\Bw\|_1)^{2d}} \left| 1- 2 \sgn(\Bw)^\intercal \frac{\Bw}{\|\Bw\|_1} \right| \\
=& \frac{1}{(2\|\Bw\|_1)^{2d}},
\end{align*}
where we use Sylvester's determinant theorem. The change of variables formula implies
\begin{align*}
&\int_\bR\int_{\cA} h(\Bw,b) \sigma(\Bw \cdot \Bx + b) d\Bw db \\
=& \int_\bR\int_{\cB} h \left(\frac{\Bw}{4\|\Bw\|_1^2},b \right)\sigma\left(\frac{\Bw\cdot \Bx}{4\|\Bw\|_1^2} + b\right) \frac{1}{(2\|\Bw\|_1)^{2d}} d\Bw db \\
=& \int_\bR\int_{\cB} h \left(\frac{\Bw}{4\|\Bw\|_1^2},\frac{b}{4\|\Bw\|_1^2} \right)\sigma(\Bw\cdot \Bx + b) \\
&\qquad \times \frac{1}{(2\|\Bw\|_1)^{2d+4}} d\Bw db,
\end{align*}
where we use the homogeneity of ReLU in the last equality. Hence, 
\begin{align*}
&f(\Bx) \\
=& \int_\bR \int_{\cB} h(\Bw,b) \sigma(\Bw \cdot \Bx + b) d\Bw db \\
&\qquad + \int_\bR \int_{\cA} h(\Bw,b) \sigma(\Bw \cdot \Bx + b) d\Bw db \\
=& \int_\bR \int_{\cB} \left( h(\Bw,b) + \frac{1}{(2\|\Bw\|_1)^{2d+4}} h \left(\frac{\Bw}{4\|\Bw\|_1^2},\frac{b}{4\|\Bw\|_1^2} \right) \right) \\
&\qquad \times \sigma(\Bw\cdot \Bx + b) d\Bw db.
\end{align*}
We define
\begin{multline*}
g(\Bw,b)\\:= 1_{\cB}(\Bw) \left( h(\Bw,b) + \frac{1}{(2\|\Bw\|_1)^{2d+4}} h \left(\frac{\Bw}{4\|\Bw\|_1^2},\frac{b}{4\|\Bw\|_1^2} \right) \right).
\end{multline*}
Then, $g(\Bw,b)=0$ for $\|\Bw\|_1>1/2$. If $\|\Bw\|_1\le 1/2$ and $|b|>1/2$, then $|b|>\|\Bw\|_1$ and $|b|/(4\|\Bw\|_1^2) >1/2$, and hence $h(\Bw,b) = h (\frac{\Bw}{4\|\Bw\|_1^2},\frac{b}{4\|\Bw\|_1^2} ) =0$, which implies $g(\Bw,b)=0$. This shows that
\[
f(\Bx) = \int_{-\frac{1}{2}}^{\frac{1}{2}} \int_{[-\frac{1}{2},\frac{1}{2}]^d} g(\Bw,b) \sigma(\Bw \cdot \Bx+b) d\Bw db, \ \Bx \in [-1,1]^d.
\]
Finally, for $0<\|\Bw\|_1\le 1/2$, by the definition of $h(\Bw,b)$ and the assumption that
\[
\sup_{r>0} \sup_{\|\Bw\|_1=1/2} \max(1,r^{2d+4}) |\hat{f}(r\Bw)| \le B,
\] 
we have $|h(\Bw,b)|\le 10(|\hat{f}(\Bw)|+|\hat{f}(-\Bw)|)\le 20 B$ and 
\begin{align*}
&\left|\frac{1}{(2\|\Bw\|_1)^{2d+4}} h \left(\frac{\Bw}{4\|\Bw\|_1^2},\frac{b}{4\|\Bw\|_1^2} \right) \right| \\
\le& 10r^{2d+4} \left(\left|\hat{f}(r \widetilde{\Bw})\right|+\left|\hat{f}(-r \widetilde{\Bw})\right|\right) \le 20 B,
\end{align*}
where $\widetilde{\Bw} = \Bw/(2\|\Bw\|_1)$ and $r= 1/(2\|\Bw\|_1)$. Therefore, $|g(\Bw,b)|\le 40 B$.

\section*{Proof of Lemma \ref{app integral}}\label{Appendix B}

We denote the function $h(\Bx,\Bw):= g(\Bw)\sigma(\Bw \cdot \Bx)$ and the random variables
\begin{align*}
Z:=& \sup_{\Bx\in [-r,r]^d} \left| f(\Bx) - \frac{1}{n} \sum_{i=1}^n h(\Bx,\Bw_i) \right|, \\
Z_+ :=& \sup_{\Bx\in [-r,r]^d} f(\Bx) - \frac{1}{n} \sum_{i=1}^n h(\Bx,\Bw_i),  \\
Z_- :=& \sup_{\Bx\in [-r,r]^d}  \frac{1}{n} \sum_{i=1}^n h(\Bx,\Bw_i) - f(\Bx).
\end{align*}
We are going to bound the expectation of these random variables by the Rademacher complexity \cite{bartlett2002rademacher}. Let $\Bw_{1:n}' = (\Bw_i')_{1\le i \le n}$ be i.i.d. samples from the uniform distribution on $[-1/2,1/2]^d$, independent of $\Bw_{1:n}$. Since $\bE_{\Bw_i'} h(\Bx,\Bw_i') = f(\Bx)$, 
\begin{align*}
&\bE_{\Bw_{1:n}} Z_+ \\
=& \bE_{\Bw_{1:n}} \left[\sup_{\Bx\in [-r,r]^d} \bE_{\Bw_{1:n}'} \frac{1}{n} \sum_{i=1}^n h(\Bx,\Bw_i') - \frac{1}{n} \sum_{i=1}^n h(\Bx,\Bw_i) \right] \\
\le& \bE_{\Bw_{1:n}, \Bw_{1:n}'} \left[ \sup_{\Bx\in [-r,r]^d} \frac{1}{n} \sum_{i=1}^n h(\Bx,\Bw_i') - h(\Bx,\Bw_i) \right].
\end{align*}
Let $\xi_{1:n} = (\xi_i)_{1\le i \le n}$ be a sequence of i.i.d Rademacher random variables, independent of $\Bw_{1:n}$ and $\Bw_{1:n}'$, Then, by symmetrization argument, we have
\begin{align*}
&\bE_{\Bw_{1:n}} Z_+ \\
\le& \bE_{\Bw_{1:n}, \Bw_{1:n}'} \left[ \sup_{\Bx\in [-r,r]^d} \frac{1}{n} \sum_{i=1}^n h(\Bx,\Bw_i') - h(\Bx,\Bw_i) \right] \\
=& \bE_{\Bw_{1:n}, \Bw_{1:n}',\xi_{1:n}} \left[ \sup_{\Bx\in [-r,r]^d} \frac{1}{n} \sum_{i=1}^n \xi_i (h(\Bx,\Bw_i') - h(\Bx,\Bw_i)) \right] \\
\le& \bE_{\Bw_{1:n}, \Bw_{1:n}',\xi_{1:n}} \left[ \sup_{\Bx\in [-r,r]^d} \frac{1}{n} \sum_{i=1}^n \xi_i h(\Bx,\Bw_i') \right. \\
&\qquad \left.+ \sup_{\Bx\in [-r,r]^d} \frac{1}{n} \sum_{i=1}^n -\xi_i h(\Bx,\Bw_i) \right] \\
=& 2 \bE_{\Bw_{1:n},\xi_{1:n}} \left[ \sup_{\Bx\in [-r,r]^d} \frac{1}{n} \sum_{i=1}^n \xi_i h(\Bx,\Bw_i) \right],
\end{align*}
where the last equality is because $\Bw_i$ and $\Bw_i'$ have the same distribution and $\xi_i$ and $-\xi_i$ have the same distribution. 

Observe that, for any $\Bw\in[-1/2,1/2]^d$ and $\Bx,\By\in [-r,r]^d$, we have $|h(\Bx,\Bw)|\le Brd/2$ and
\begin{align*}
|h(\Bx,\Bw) - h(\By,\Bw)| &= |g(\Bw)\sigma(\Bw\cdot \Bx) - g(\Bw)\sigma(\Bw\cdot \By)| \\
&\le B |\Bw \cdot (\Bx-\By)| \le \tfrac{1}{2} B d \|\Bx-\By\|_\infty.
\end{align*}
For any $\epsilon>0$, there exists a set $S_\epsilon = \{\Bx_j \}_{j=1}^N \subseteq [-r,r]^d$ of $N\le (1+r/\epsilon)^d$ points such that $S_\epsilon$ is an $\epsilon$-covering of $[-r,r]^d$: for any $\Bx$, there exists $\Bx_j \in S_\epsilon$ such that $\|\Bx - \Bx_j\|_\infty \le \epsilon$. Hence,
\begin{align*}
&\bE_{\Bw_{1:n}} Z_+ \\
\le& 2\bE_{\Bw_{1:n},\xi_{1:n}} \left[ \sup_{\Bx_j\in S_\epsilon} \frac{1}{n} \sum_{i=1}^n \xi_i h(\Bx_j,\Bw_i) \right.\\
&\qquad \left. + \sup_{\|\Bx-\Bx_j\|_\infty \le \epsilon} \frac{1}{n} \sum_{i=1}^n \xi_i (h(\Bx,\Bw_i) -h(\Bx_j,\Bw_i)) \right] \\
\le& 2\bE_{\Bw_{1:n},\xi_{1:n}} \left[ \sup_{\Bx_j\in S_\epsilon} \frac{1}{n} \sum_{i=1}^n \xi_i h(\Bx_j,\Bw_i) \right] + Bd \epsilon.
\end{align*}
By Massart's lemma \cite[Lemma 26.8]{shalevshwartz2014understanding}, 
\begin{multline*}
\bE_{\xi_{1:n}} \left[ \sup_{\Bx_j\in S_\epsilon} \frac{1}{n} \sum_{i=1}^n \xi_i h(\Bx_j,\Bw_i) \right] \\
\le \frac{Brd \sqrt{2\log N}}{2\sqrt{n}} \le \frac{Brd}{2} \sqrt{\frac{2d \log (1+r/\epsilon)}{n}}.
\end{multline*}
If we choose $\epsilon = r/\sqrt{n}$, then
\begin{align*}
\bE_{\Bw_{1:n}} Z_+ \le& Brd \sqrt{\frac{2d \log (1+\sqrt{n})}{n}} + \frac{Brd}{\sqrt{n}} \\
\le& 2Brd \sqrt{\frac{2d\log(n+1)}{n}}.
\end{align*}

Next, we apply McDiarmid's inequality \cite[Theorem 6.2]{boucheron2013concentration} to the random variable $Z_+$. Observe that, if one of the $\Bw_j$ is replaced by $\Bw_j'$, then the difference is 
\begin{align*}
&\left| \sup_{\Bx\in[-r,r]^d} \left[ f(\Bx) - \frac{1}{n} \sum_{i=1}^n h(\Bx,\Bw_i) \right]  - \sup_{\Bx\in[-r,r]^d} \right. \\
&\quad \left. \left[ f(\Bx) - \frac{1}{n} \sum_{i=1}^n h(\Bx,\Bw_i) + \frac{1}{n}h(\Bx,\Bw_j) - \frac{1}{n} h(\Bx,\Bw_j') \right] \right| \\
\le& \frac{1}{n} \sup_{\Bx\in[-r,r]^d} \left|h(\Bx,\Bw_j) - h(\Bx,\Bw_j') \right| \le \frac{Brd}{n},
\end{align*}
because the inequalities hold for each fixed $\Bx\in[-r,r]^d$ and taking supremum cannot increase the difference. Hence, McDiarmid's inequality implies
\[
\bP(Z_+- \bE Z_+>t ) \le \exp \left(-\frac{2nt^2}{B^2r^2d^2} \right).
\]
For any $\delta\in(0,1)$, if we choose $t= Brd \sqrt{\log(2/\delta)/(2n)}$, then with probability at least $1-\delta/2$, 
\begin{align*}
Z_+ \le& \bE Z_+ +t \\
\le& Brd \left( 2\sqrt{\frac{2d\log(n+1)}{n}} + \sqrt{\frac{\log(2/\delta)}{2n}} \right) =: \cE.
\end{align*}
Applying similar argument to $Z_-$ shows that $Z_-\le \cE$ with probability at least $1-\delta/2$. Hence,
\[
\bP(Z>\cE) \le \bP(Z_+>\cE) + \bP(Z_->\cE) \le \delta. \qedhere
\]

\bibliographystyle{IEEEtran}
\bibliography{IEEEabrv,References}

%
%

\end{document}